\definecolor{ultramarine}{rgb}{0.07, 0.04, 0.56}
\newcommand{\nosemic}{\renewcommand{\@endalgocfline}{\relax}}
\newcommand{\dosemic}{\renewcommand{\@endalgocfline}{\algocf@endline}}
\title{\textbf{Towards an efficient and risk aware strategy for guiding farmers in identifying best crop management}}
\author[1,2,3]{Romain Gautron \thanks{\href{mailto:romain.gautron@cirad.fr}{romain.gautron@cirad.fr}}}
\author[4]{Dorian Baudry}
\author[5,6,7]{Myriam Adam}
\author[1,2,8]{Gatien N. Falconnier}
\author[1,2,9]{Marc Corbeels}
\affil[1]{AIDA, Univ Montpellier, France.}
\affil[2]{CIRAD, Montpellier, France.}
\affil[3]{CGIAR Platform for Big Data in Agriculture, Alliance of Bioversity International and CIAT, Km 17, Recta Cali Palmira 763537, Colombia.}
\affil[4]{Univ. Lille, CNRS, Inria, Centrale Lille, UMR 9198-CRIStAL, F-59000 Lille, France.}
\affil[5]{CIRAD, UMR AGAP Institut, Bobo-Dioulasso 01, Burkina Faso.}
\affil[6]{UMR AGAP Institut, Univ Montpellier, CIRAD, INRAE, Institut Agro, Montpellier, France.}
\affil[7]{Institut National de l’Environnement et de Recherches Agricoles (INERA), Burkina Faso.}
\affil[8]{International Maize and Wheat Improvement Centre (CIMMYT)-Zimbabwe, 12.5 km Peg Mazowe Road, Harare, Zimbabwe.}
\affil[9]{International Institute of Tropical Agriculture, PO Box 30772, Nairobi, 00100, Kenya.}
\date{\today}
\begin{document}

\maketitle
\begin{abstract}
Identification of best performing fertilizer practices among a set of contrasting practices with field trials is challenging as crop losses are costly for farmers. To identify best management practices, an “intuitive strategy” would be to set multi-year field trials with equal proportion of each practice to test. Our objective was to provide an identification strategy using a bandit algorithm that was better at minimizing farmers’ losses occurring during the identification, compared with the ``intuitive strategy''. We used a modification of the Decision Support Systems for Agro-Technological Transfer (DSSAT) crop model to mimic field trial responses, with a case-study in Southern Mali. We compared fertilizer practices using a risk-aware measure, the Conditional Value-at-Risk (CVaR), and a novel agronomic metric, the Yield Excess (YE). YE accounts for both grain yield and agronomic nitrogen use efficiency. The bandit-algorithm performed better than the intuitive strategy: it increased, in most cases, farmers’ protection against worst outcomes. This study is a methodological step which opens up new horizons for risk-aware ensemble identification of the performance of contrasting crop management practices in real conditions.
\end{abstract}
\section{Introduction}
\label{sec:intro}
Identifying site-specific best-performing crop management is crucial for farmers to increase their income from crop production, but also for minimizing the negative environmental impact of cropping activities \citep{tilman2002agricultural}. However, due to weather variability, the identification of these practices can be challenging, in particular with rainfed farming: what worked best in a wet year, might not work in the next season, when rainfall is less \citep[][]{affholder1995effect}. In fact, the performance of crop management at a given site has an underlying “hidden” distribution due to inter-annual weather variability, thus creating great uncertainty \citep[][]{fosu2012simulating}. Because crop management decisions are recurrent, i.e. they are repeated for each new crop growing season, the identification of optimal crop management falls into the category of sequential decision making under uncertainty \citep[][]{gautron2022reinforcement}. Computer-based decision support tools  can allow farmers to make more informed (less uncertain) decisions about their cropping practices from one year to the next, and can facilitate farmers’ risk management in the face of seasonal weather variability  \citep{hochman2011emerging}. There exist numerous decision support tools of widely ranging complexity for crop management, introduced to farmers with varying degrees of success \citep[][]{gautron2022reinforcement}.

Machine learning (ML) and more generally artificial intelligence (AI) can help address sequential decision making under uncertainty. In particular, the bandit algorithm paradigm \citep{lattimore2020bandit} considers a decision-maker, called agent, who repeatedly faces a choice between contending actions, and has to iteratively improve its decisions with trials. The canonical bandit problem originates from clinical trials with sequential drug allocation \citep{thompson1933likelihood}. At each time step, the agent chooses one action (i.e.\@, one drug for a patient) amongst a set of possible actions. Each action provides a reward (i.e.\@; tumor cell reduction after taking the drug), drawn from a corresponding unknown reward distribution (i.e.\@, the distribution of tumor cell reduction for the drug). The optimal action has the reward distribution with the highest mean reward (i.e.\@, the highest mean tumor cell reduction). The objective of the agent is to sequentially choose actions such that the expected sum of rewards is maximized. Maximizing the total expected rewards is equivalent to minimizing the regret, which is a measure of the total losses that occur with sub-optimal actions \citep{robbins1952some}. 

Iteratively, the agent refines his next decision based on all previous results. To know how a given action performs, a sufficient number of (possibly poor) rewards is required: this is the exploration phase. To maximize the expected sum of rewards, the previous actions that provided good results so far must be selected more frequently; this is the exploitation phase. Bandit algorithms aim at finding the right balance between exploration and exploitation. This  \textit{exploration-exploitation dilemma} is a reality for farmers when implementing crop management. Farmers typically want to minimize overall crop yield losses and typically explore the performance of promising new crop management practices on small test plots \citep{cerf2006outils, evans2017data}. They avoid potentially large crop yield losses from new management by managing a gradual transition between the current management and the promising new one(s), based on the results they obtain on the small test plots.

The objective of this paper is to develop a novel strategy to identify best crop management. We set as baseline an “intuitive strategy” which consists in identifying the best crop management through multi-year field trials in which a set of crop management practices is tested in an equiproportional way. We compare this “intuitive strategy” to a novel crop management identification strategy, based on a bandit algorithm. This novel identification strategy aims to minimize farmers’ yield losses occurring during the identification process, compared to the intuitive strategy. Thus, we test the hypothesis that bandit algorithm can help farmers to better identify the best crop management for their context, while further minimizing crop yield losses related to sub-optimal choices in new crop management.

Our case study considers the rainfed maize production in southern Mali, and we compare the performance of both crop management identification strategies based on maize growth simulations using a calibrated crop model in order to mimic real-world performance of crop management. The novel identification strategy does, however, not depend on model simulations, and ultimately aims at being applied in real field conditions. As for crop management, we focus on nitrogen fertilization. Tailoring nitrogen fertilizer recommendations to farmers’ contexts is known to be challenging. Indigenous soil nitrogen supply, depending to a large extent on past-season events, is not accurately known to farmers, whilst in-season nitrogen mineralization depends largely on weather events\citep{morris2018strengths}, themselves uncertain. Crop nitrogen requirements, such as with maize, are related to specific crop growth stages \citep{hanway1963growth} and excessive mineral nitrogen supply can induce nitrate leaching, especially in wet conditions \citep{meisinger2002principles}. Therefore, there are \textit{a priori} no upfront optimal nitrogen fertilizer practices.
\section{Methods}
\label{sec:methodsBatchBandit}
\subsection{Virtual crop management identification problem}
\label{sec:learningProcess}
In our virtual crop management identification problem, a population or ensemble of farmers joined a participatory experiment to identify the best nitrogen fertilizer practices for maize production in their region, Koutiala in southern Mali. A total population of 500 farmers was considered. The distribution of soil types of the fields associated with the group of farmers was representative of the region (Table~\ref{tab:soils}). A total population of 500 farmers was considered. Each farmer belonged to a cohort that corresponded to an ensemble of farmers growing maize on the same soil type. For each cohort, we wanted to identify the best nitrogen fertilizer practice from a set of recommended practices (see Table~\ref{tab:optimalpractices} and Section~\ref{sec:YEandCVaR} for the performance metrics we considered). The research team set the additional objective to limit the crop yield losses of individual farmers that could arise from poor nitrogen fertilizer practice recommendations during the identification process.

At the beginning of each crop growing season, we assumed that a random number of farmers (uniformly obtained between 250 and 350) of the population of 500 farmers volunteered to apply the recommended fertilizer applications provided by the research team. Each year, the group of volunteers was variable in size and in the representation of cohorts, as could occur in reality (Figure~\ref{fig:samplingGroup}). Thus, researchers did not control the composition of the group of volunteers. Each farmer indicated the fields and corresponding soils on which she/he planned to grow maize. Researchers then provided a fertilizer recommendation (Table~\ref{tab:optimalpractices}) to each farmer for the ongoing season, depending on her/his soil i.e.\@ cohort. At the end of the season, volunteer farmers shared their results in terms of maize grain yields with the research team, allowing to refine the recommendations for the next season. The whole process was repeated during 20 consecutive years following the same process (Figure~\ref{fig:decisionProblem}).

\begin{figure}
    \centering
    \includegraphics[width=.5\textwidth]{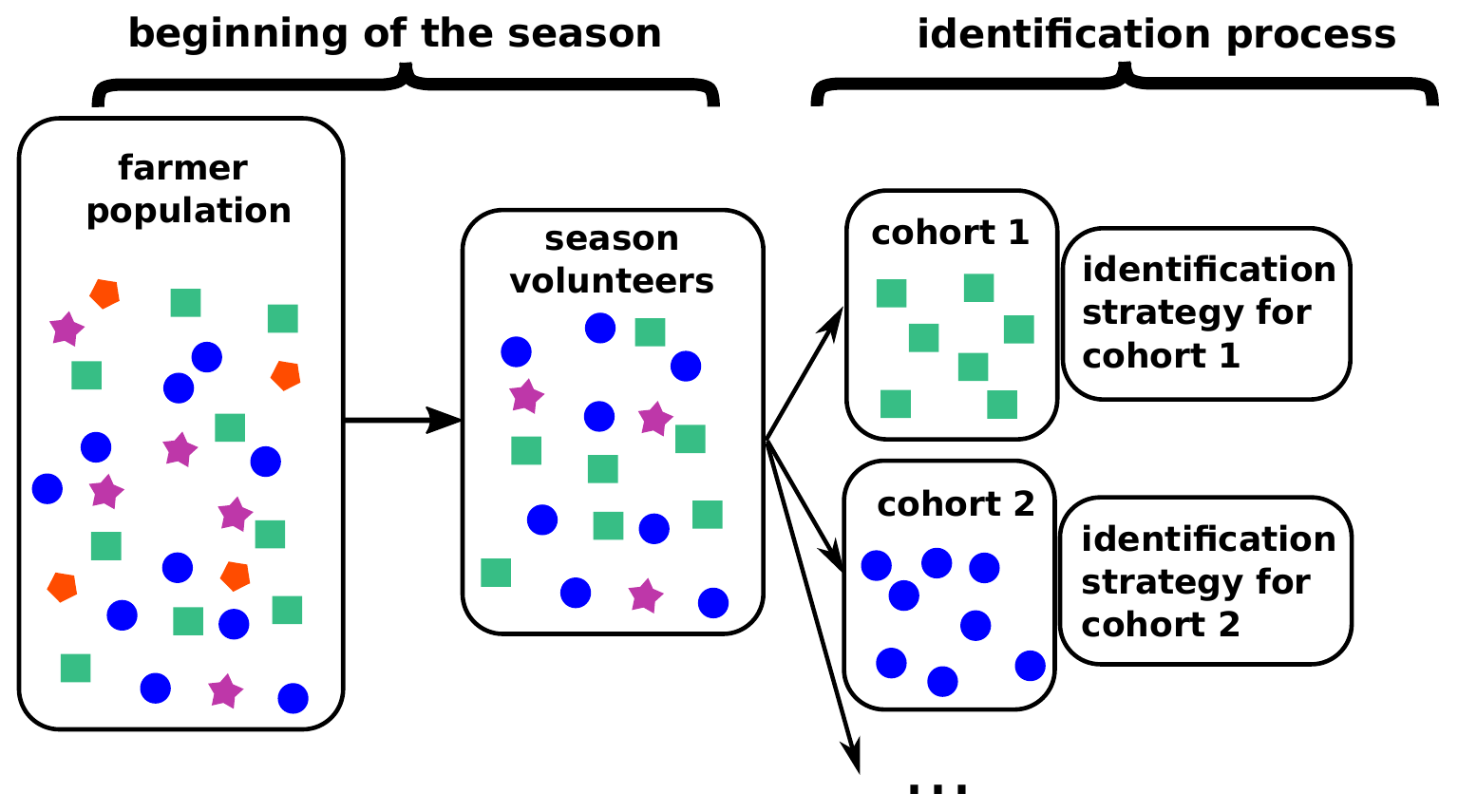}
    \caption{Yearly process to generate nitrogen fertilizer recommendations: at the beginning of the crop ping season. Individuals from the overall farmer population volunteered to test a fertilizer practice. Similar symbols represent a cohort, i.e.\@, a group of farmers having fields with the same soil type. The group of volunteer farmers was broken down by cohort and researchers independently generated fertilizer recommendations for each cohort. Researchers did not control the number of volunteers from the respective cohorts In this example, only three of the four possible cohorts are found in the volunteer group.}
    \label{fig:samplingGroup}
\end{figure}%

\begin{figure}
    \centering%
    \begin{subfigure}[t]{.48\textwidth}
        \centering
        \includegraphics[width=\textwidth]{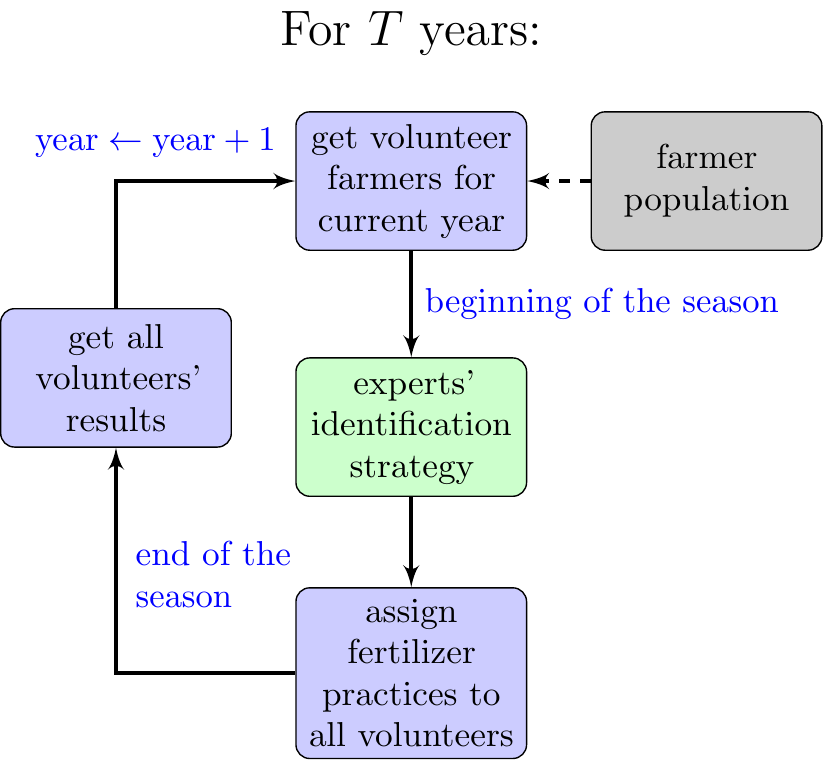}
        \caption{Diagram of the ensemble best fertilizer identification process. Each year, a group of volunteer farmers test fertilizer practices recommended by experts and contribute to identifying the best fertilizer practices for the region. At the end of each season, the farmers share their results with experts. The experts will use these results to improve their recommendations for the next growing season. The process repeats for a total number of $T$ years.}
        \label{fig:decisionProblem}
    \end{subfigure}
    \hfill%
    \begin{subfigure}[t]{.48\textwidth}
        \includegraphics[width=.7\textwidth]{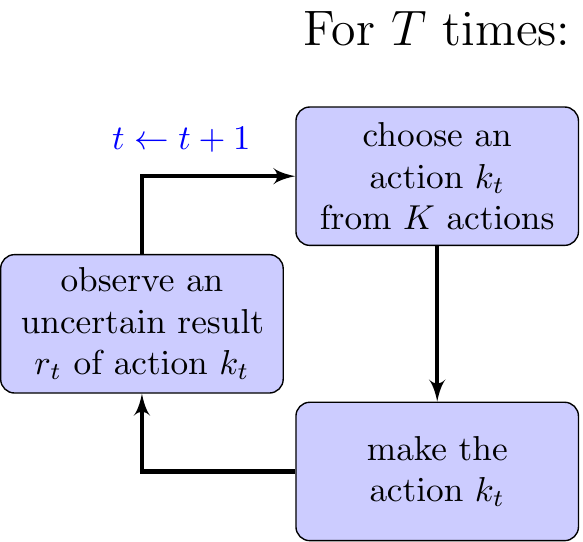}
        \caption{Canonical bandit problem. For $T$ times, an agent sequentially makes a decision on an action $k_t$ from the set $\{1, \cdots, K\}$ of possible actions. After making the action $k_t$, the agent observes an uncertain result $r_t$. This result is sampled from a fixed distribution, unknown to the agent, which corresponds to the effect of action $k_t$.
        }
        \label{fig:banditProblem}
    \end{subfigure}
\caption{Schematic representation of the ensemble best fertilization identification process and the canonical bandit problem.}
\label{fig:banditProblems}
\end{figure}

\paragraph{Nitrogen fertilizer practices.}Ten nitrogen fertilizer practices were considered as recommendations in the virtual modeling experiment (see Table~\ref{tab:practices}). Practices 0 to 7 explored the following set of split applications for a total amount of 135 kg N/ha applied:
\begin{itemize}[label=-]
    \item Two split applications (practice 0):  15 days after planting (DAP) and 30 DAP.
    \item Three split applications (practice 4) :15 DAP, 30 DAP and 45 DAP.
    \item Split applications according to the rainfall amount (practices 2, 3 and 6, 7): 2nd and 3rd top-dressing applications only if the cumulated rainfall amount from the start of the season to 30 DAP exceeds the 30th percentile of historical rainfall i.e. 200 mm.
    \item Split applications according to plant nitrogen content (practices 1, 3 and 5, 7): 2nd and 3rd top-dressing applications only if the simulated nitrogen stress factor (\texttt{NSTRES} in DSSAT, see below) exceeds 0.2 (0 no stress, 1 maximal stress), hereby mimicking the use of a portable chlorophyll meter to monitor plant nitrogen content \citep[e.g.\@][]{kalaji2017comparison}.
\end{itemize}
Practice 8  corresponded to the optimal fertilization for maize  (70 kg N/ha) in the study area based on simulations  \citep{huet2022coping} , i.e.\@ the average of the N fertilizer rates that were observed to result in maximum positive return on fertilizer investment \citep{getnet2016yield}. Finally, practice 9 (180 kg N/ha) corresponded to a nitrogen fertilizer practice that is likely excessive. For all these practices, the nitrogen fertilizer applied was assumed to be ammonium nitrate broadcasted on the soil surface.

\begin{table}
\caption{Main properties of the soil types of the fields of farmers growing maize in Koutiala, Mali \citep{adam2020more}. ‘\texttt{SLOC}.’ stands for soil organic matter  (g C/ 100 g soil, mean value for the  0-30 cm topsoil); `\texttt{SLDR}' stands for soil drainage rate (fraction/day); `\texttt{SLDP}' stands for soil depth (cm); ‘Prop’ stands for the percentage of each soil type present in the study area. \label{tab:soils}}
\centering
\begin{tabular}{p{2cm}p{2.2cm}p{.7cm}p{.7cm}p{.7cm}p{.7cm}}
	\textbf{Soil name}  & \textbf{Texture}& \textbf{\texttt{SLDR}} & \textbf{\texttt{SLOC}} & \textbf{\texttt{SLDP}} & \textbf{Prop.}\\ \midrule
	\texttt{ITML840101} & clay loam       & 0.60 & 0.20                                  & 110                 & 7\%\\
	\texttt{ITML840102} & loam            & 0.60 & 0.45                                  & 100                 & 9\%\\  
	\texttt{ITML840103} & silty loam      & 0.60 & 0.27                                  & 160                 & 21\%\\  
	\texttt{ITML840104} & silty clay loam & 0.25 & 0.70                                  & 105                 & 4\%\\  
	\texttt{ITML840105} & silty clay loam & 0.40 & 0.35                                  & 120                 & 24\%\\  
	\texttt{ITML840106} & loam            & 0.60 & 0.30                                  & 110                 & 27\%\\  
	\texttt{ITML840107} & silty clay loam & 0.25 & 0.60                                  & 105                 & 8\%\\
\end{tabular}
\end{table}

\begin{table}
\caption{Maize nitrogen fertilizer recommendations for maize in Koutiala, Southern Mali, that were considered in the virtual experiment. Whether or not rainfall and plant nitrogen stress were considered as factors for the fertilizer recommendation is indicated by Yes or No. `\texttt{NSTRES}' stands for plant nitrogen stress and ‘DAP’ for days after planting.
\label{tab:practices}}
\begin{tabular}{p{1cm}p{2cm}p{1.5cm}p{1.5cm}p{1.5cm}p{2cm}p{2cm}p{2cm}}
\textbf{index} &  \textbf{max amount applied  (kgN/ha)} & \textbf{max applications} & \textbf{rainfall threshold} & \textbf{\texttt{NSTRES} threshold} & \textbf{15 DAP  N (kgN/ha)} & \textbf{30 DAP N (kgN/ha)} & \textbf{45 DAP N (kgN/ha)}\\ \midrule
0 & 135  & 2  & No   & No   & 15  & 120 & 0\\
1 & 135  & 2  & No   & Yes  & 15  & 120 & 0\\
2 & 135  & 2  & Yes  & No   & 15  & 120 & 0\\
3 & 135  & 2  & Yes  & Yes  & 15  & 120 & 0\\
4 & 135  & 3  & No   & No   & 15  & 60  & 60\\
5 & 135  & 3  & No   & Yes  & 15  & 60  & 60\\
6 & 135  & 3  & Yes  & No   & 15  & 60  & 60\\
7 & 135  & 3  & Yes  & Yes  & 15  & 60  & 60\\
8 & 70   & 2  & No   & No   & 23  & 0  & 47\\
9 & 180  & 3  & No   & No   & 60  & 60 & 60\\
\end{tabular}
\end{table}

\paragraph{Maize growth simulations.}In order to get a proxy for real-world performances of the maize nitrogen fertilizer practices, we simulated maize growth responses to fertilization under the growing conditions of Koutiala in southern Mali using \texttt{gym-DSSAT} \citep{gymdssat}.  \texttt{gym-DSSAT} is a modification of the DSSAT crop simulator \citep{hoogenboom2019dssat} to allow a user to read DSSAT internal states and take daily fertilization decisions during the simulations (e.g.\@ based on DSSAT internal states). For each soil type in Table~\ref{tab:soils} that was parametrized in DSSAT using the data from \citet{adam2020more}, each simulated maize grain yield value is a sample of the response distribution for the considered fertilizer practice. This response distribution is the result of weather variability, generated in our study by the stochastic weather generator WGEN \citep{richardson1984wgen, soltani2003statistical}, which was calibrated using the 47-year-long weather records from N’tarla, about 30 km from Koutiala \citep{ripoche2015cotton}. The `sotubaka' maize cultivar (from the DSSAT default cultivar list) was used for all model simulations as a representative of maize variety in southern Mali. Water and nitrogen stresses were simulated, but yield reduction through pests and diseases were not considered, neither was weed competition. In the model simulations, a different weather time series was generated for each growing season and for each recommendation using WGEN, inducing independent simulated maize yield responses to nitrogen fertilization. Section~\ref{sec:simulationsBatchBandit} of Supplementary Materials gives further details of the simulation settings.

We simulated $10^5$ times the maize grain yield responses to a given fertilizer practice for the differet soil types, which corresponds to $10^5$ hypothetical growing seasons. These samples were used i) to ensure that simulated maize yield responses were in realistic expected ranges, ii) to qualitatively evaluate the complexity of the decision problem, and iii) to determine best nitrogen fertilizer practices  whilst analyzing the performance of the crop management identification strategies. The samples were not provided to the algorithms prior to their application (i.e.\@ no prior knowledge of the problem).

\subsubsection{Performance indicators of fertilizer practices}
\label{sec:YEandCVaR}
A criterion to evaluate both the economic and environmental performance of a fertilizer practice $\pi$ is Agronomic Nitrogen use Efficiency (ANE), as defined in \citet{vanlauwe2011agronomic}:
\begin{equation} \label{eq:ANE}
\text{ANE}^{\pi} :=\frac{\text{Y}^{\pi} - \text{Y}^0}{\text{N}^{\pi}}
\end{equation}
where $\text{Y}^{\pi}$ is the crop yield obtained with the nitrogen fertilizer practice $\pi$ which required a quantity $\text{N}^{\pi}$ of nitrogen and $\text{Y}^{0}$ is the yield of the control obtained in the same conditions without nitrogen fertilization. Maximising ANE is a proxy of minimizing the quantity of nitrogen losses, e.g.\@ through nitrate leaching.

However, ANE has some limitations: for example, an ANE value of 25 kg grain/kg N can be achieved with a fertilizer input of 20 kg N/ha yielding a total yield gain of 500 kg/ha, or with an input of 60 kg N/ha yielding a total gain of 1500 kg/ha. For the same ANE, a farmer is likely to prefer the fertilizer practice that provides the greatest crop yield gain, i.e.\@ with 60 kg N/ha. Similarly, choosing fertilizer practices only based on the associated crop yield gains is not satisfying. A similar yield gain can be achieved with different nitrogen fertilizer input rates which result in fairly different ANE: the practice with the highest efficiency must be preferred as it required less nitrogen fertilizer to achieve the same yield gain.

We built the Yield Excess (YE) indicator that favors the nitrogen fertilizer practice with the highest yield gain for those practices sharing the same ANE, and favors the practice with the highest efficiency for those practices sharing the same yield gain. YE of a nitrogen fertilizer practice $\pi$ with respect to the reference practice $\pi_{\text{ref}}$ of constant efficiency $\text{ANE}_{\text{ref}}$ using the same  quantity of nitrogen fertilizer as practice $\pi$, denoted $\text{N}^{\pi}$, is computed as follows:
\begin{align}
        \text{YE}^{\pi}  :=& \text{~Y}^{\pi} - \text{~Y}^{\pi_{\text{ref}}} \label{eq:YE0}\\
        =& \underbrace{\text{~Y}^{\pi} - \text{~Y}^{0}}_{\substack{\text{yield gain of $\pi$} \\ \text{w.r.t. control}}} - \underbrace{\big(\text{~Y}^{\pi_{\text{ref}}} - \text{~Y}^{0}\big)}_{\substack{\text{yield gain of $\pi_{\text{ref}}$} \\ \text{w.r.t. control}}}\\
        =& \text{~Y}^{\pi} - \text{Y}^0 - \text{N}^{\pi} \times \text{ANE}_{\text{ref}} \label{eq:YE1}\\
        =& \big(\text{Y}^{\pi} - \text{Y}^0\big) \times \underbrace{\Big(1 - \frac{\text{ANE}_{\text{ref}}}{\text{ANE}^{\pi}}\Big)}_{\substack{\text{penalization factor}}} \label{eq:YE2}
\end{align}

\paragraph{}The YE of practice $\pi$ with respect to the reference practice $\pi_{\text{ref}}$ corresponds to the yield difference between the practice $\pi$ and a reference practice that has a constant ANE equal to  $\text{ANE}_{\text{ref}}$ and which uses the same quantity $\text{N}^{\pi}$ of nitrogen fertilizer as $\pi$.  $\text{YE}^{\pi}$ increases with $\text{ANE}^{\pi}$ (Figure~\ref{fig:YEsurface}). $\text{YE}^{\pi}$ is negative and decreases with $\text{Y}^{\pi} - \text{Y}^0$ when $\text{ANE}^{\pi} < \text{ANE}_{\text{ref}}$ and is positive and  increases with $\text{Y}^{\pi} - \text{Y}^0$ when $\text{ANE}^{\pi} \geq \text{ANE}_{\text{ref}}$. The YE of fertilizer practices with efficiency below $\text{ANE}_{\text{ref}}$ are negatively affected by this metric. 
We chose $\text{ANE}_{\text{ref}}=15$ kg grain/kg N for our model simulation experiments, the average ANE currently achieved by farmers across sub-Saharan Africa \citep{ten2019maize, vanlauwe2011agronomic}.

\begin{figure}
    \includegraphics[width=\textwidth]{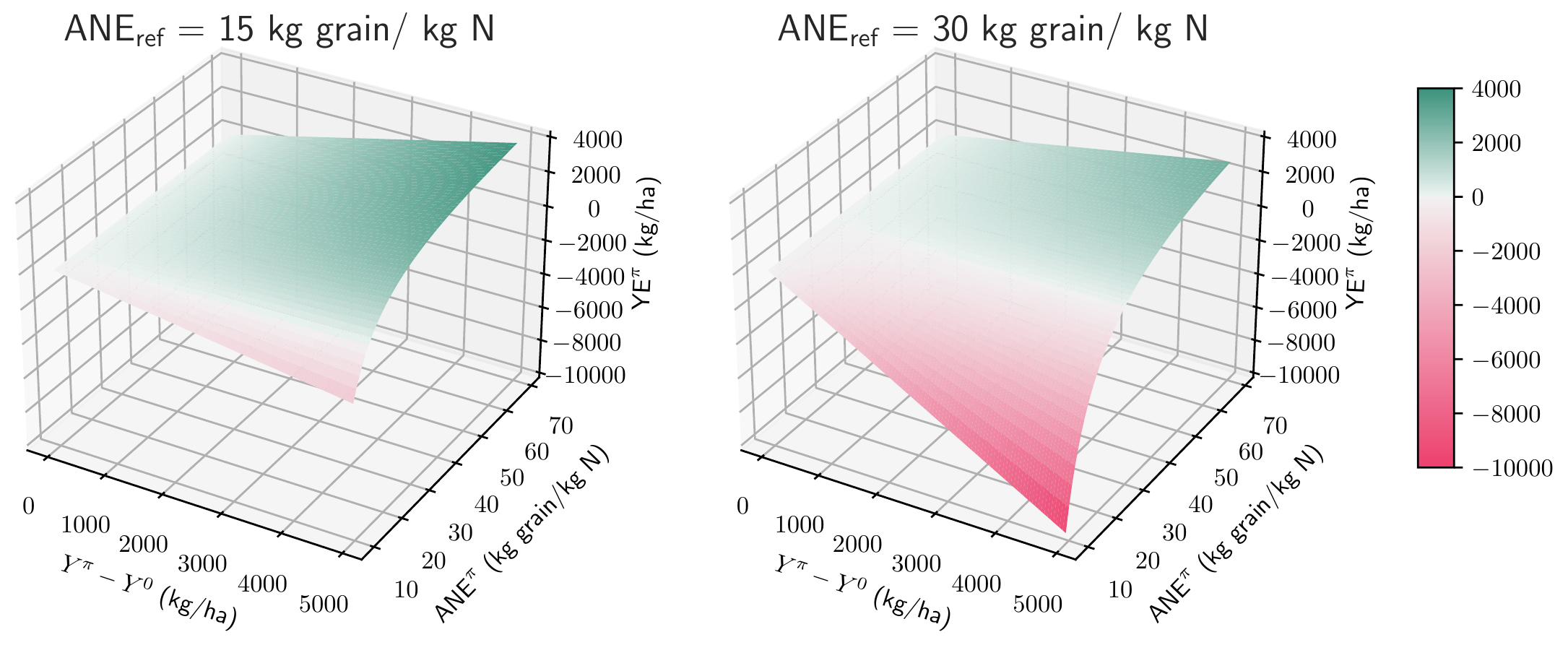}
    \caption{Yield Excess ($\text{YE}^{\pi}$,  Equation~\ref{eq:YE2}) for $\text{ANE}_{\text{ref}}=15$ kg grain /kg N and $\text{ANE}_{\text{ref}}=30$ kg grain /kg N. $\text{Y}^{\pi}$ is the maize grain yield obtained with nitrogen fertilizer practice $\pi$, $\text{Y}^{0}$ is the yield obtained with no nitrogen fertilization (control). $\text{ANE}^{\pi}$ is the Agronomic Nitrogen use Efficiency of the nitrogen fertilizer practice ${\pi}$ (Equation~\ref{eq:ANE}).}
    \label{fig:YEsurface}
\end{figure}

Because farmers are usually risk averse \citep[e.g.\@][]{cerf1997approche, menapace2013risk, jourdain2020farmers}, they are likely to prefer, for example, a stable maize grain yield of 3000 kg/ha rather than a yield of 5000 kg/ha in half of the years, and of 1000 kg/ha in the other half of the years, while both distributions share the same expectation.
To account for risk aversion, we computed the Conditional-Value-at-Risk \citep[CVaR,][]{mandelbrot1997variation, acerbi2001cvar}, a risk-aware metric that originated from finance. The CVaR focuses on the lower tail of the distribution\footnote{Two definitions of the CVaR coexist in the literature, depending if an outcome is considered as a gain or a cost \citep{dowd2007measuring}. We adopted the gain point of view.}. For a (continuous) random variable X with cumulative distribution function $F_X$, we call Value-at-Risk (VaR) of level $\alpha$ the quantile of probability $\alpha \in (0,1]$ of X, defined as:
\begin{equation}
\text{VaR}_{\alpha}(X) := \inf{\{x \in \mathbb{R} : F_X(x) > \alpha\}}
\end{equation}
Then the CVaR of X of level $\alpha \in (0,1]$ is the mean value of the left tail of X of probability $\alpha$, defined as:
\begin{equation}
    \cvar{\alpha}(X) := \Esp[ X \vert X \leq \text{VaR}_{\alpha}(X)]\,  \label{eq:cvar}
\end{equation}

A decision maker would choose the option with the highest CVaR for the considered level $\alpha$. The more $\alpha \to 0^+$, the more the metric focuses on the worst observable yields. On the contrary, the more $\alpha \to 1$, the less risk averse is the measure. When $\alpha = 1$, the CVaR equals the usual expectation $\Esp{[X]}$, which is risk neutral (Figure~\ref{fig:cvar}). In our model simulation experiments, we chose $\alpha=30\%$. The $\cvar{30\%}$ represents the mean crop yield of the 30\% worst observable years.

\newcommand\scaleboxpara{.7}
\begin{figure}
	\begin{subfigure}[t]{0.5\textwidth}
		\centering
		\includegraphics[width=.8\textwidth]{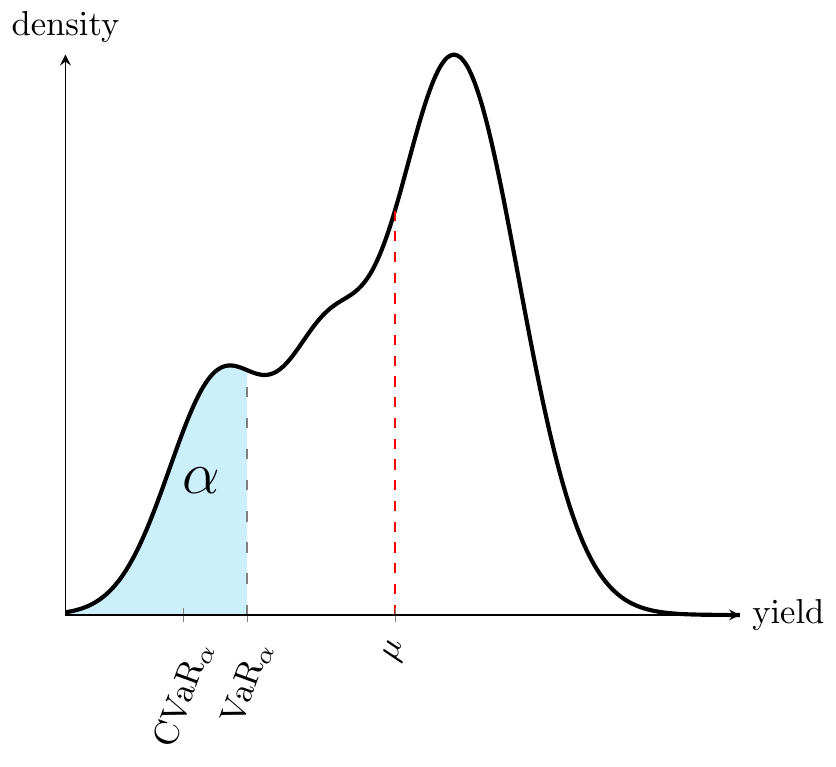}
		\caption{High risk aversion ($\alpha \approx 20 \%$)}
		\label{fig:cvarHigh}
	\end{subfigure}
		\begin{subfigure}[t]{0.5\textwidth}
		\centering
		\includegraphics[width=.8\textwidth]{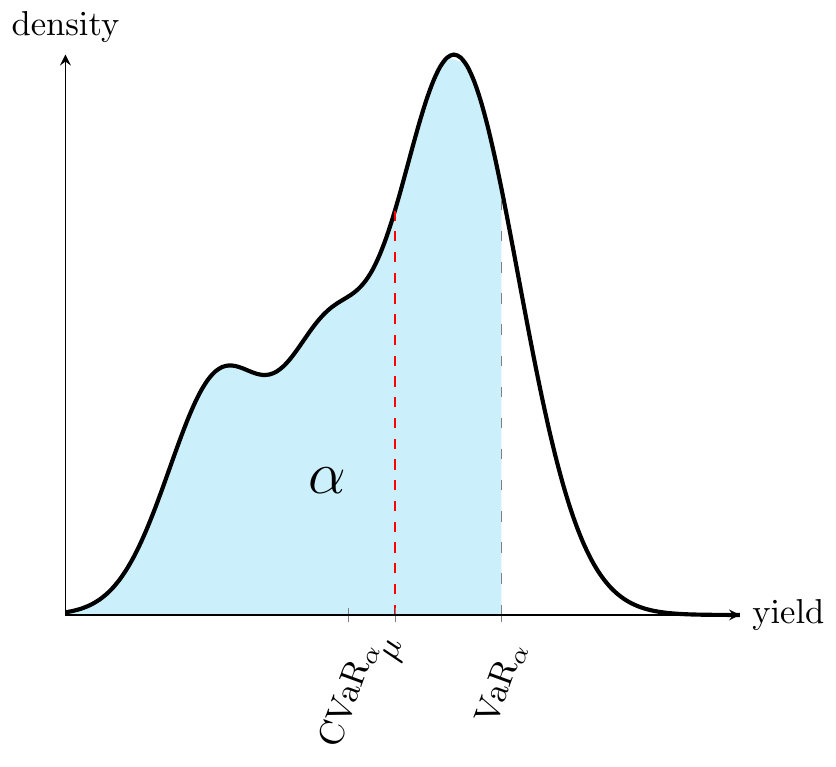}
		\caption{Low risk aversion ($\alpha \approx 80 \%$)}
		\label{fig:cvarLow}
	\end{subfigure}%
    \caption{The Conditional Value-at-Risk (CVaR) of level  $\alpha$ is the mean value of the blue area of the distribution of probability $0 < \alpha \leq 1$ . $\text{VaR}_{\alpha}$ stands for Value-at-Risk of level $\alpha$ and is the quantile of probability $\alpha$ of the distribution. The more $\alpha \to 1$, the more risk neutral is the CVaR. $\mu$ represents the mean value of the distribution which equivalent to the CVaR of level $\alpha=100\%$.}
\label{fig:cvar}
\end{figure}

\subsection{Identification of the best fertilizer practices}
\label{sec:learningBestFertilization}
\paragraph{The canonical and batch bandit problems}The ensemble identification of the best crop management practices with the constraint of minimizing farmers' crop yield losses occurring during the identification process (Section~\ref{sec:learningProcess}) can be modelled as a special type of bandit problems. The canonical bandit problem, which is the cumulated regret minimization (see Introduction), assumes that at each time step, a single trial is made and is followed by a single observation of a result, in a purely sequential mode. In contrast, the batch bandit setting \citep{perchet2015batch} assumes that at each time step an ensemble of trials are conducted in parallel, followed by the observation of an ensemble of results. Figure~\ref{fig:banditProblems} illustrates on the one hand the ensemble identification process of best crop fertilizer practices (Figure~\ref{fig:decisionProblem}), modelled as a batch-bandit problem, and the on other hand the canonical bandit problem  (Figure~\ref{fig:banditProblem}). 

In the canonical bandit problem, the agent goal is to maximize the expectation of the sum of rewards that were collected since the first decision. The agent objective can be formalized as maximizing $\Esp\Big[\sum\limits_{t=1}^T r_t\Big]$ for any time horizon $T\geq1$, with $r_t$ the reward the agent has collected at time $t$. On the other hand, bandits that are \textit{risk-aware}  \citep{cassel2018general}, the agent maximizes a risk-aware measure of the collected rewards, such as the CVaR (Section~\ref{sec:YEandCVaR}), instead of the expectation of rewards. Our ensemble fertilizer decision problem can be described as a \textit{risk-aware batch-bandit} decision problem.

\paragraph{The ensemble identification problem of best fertilizer practices}In our virtual modeling experiment, for $t \in \{1, 2, \cdots, T\}$, at each season $t$, researchers assigned each $n_t$ volunteer farmers for season $t$ with a nitrogen fertilizer practice $\pi \in \{1, 2, \cdots, K\}$. Each farmer belonged to a cohort $c \in \{1, 2, \cdots, C\}$. At the end of season $t$, researchers assemble rewards $Y_t=\{y^1_t,\dots, y^{n_t}_t\}$ as a result of the fertilizer practices of all farmers for season $t$. For each cohort $c \in \{1, \cdots, C\}$, rewards are independently and identically distributed from unknown stationary distributions $\{ \nu_1^c, \cdots, \nu_K^c \}$. These reward distributions are the YE with $\text{ANE}_{\text{ref}}=15$ kg grain/kg N associated to each of the ten recommended nitrogen fertilizer practices, for a given soil type. We denote $\cY_T= \bigcup_{t=1}^{T} Y_t$ the set of all rewards observed by all farmers between $t=1$ and $t=T$. The objective of an identification strategy is to maximize, for a given CVaR level $\alpha$ and any time horizon $T\geq1$:
\begin{equation}
    \Esp[\text{CVaR}_{\alpha}(\cY_T)]
\label{eq:farmerObj}
\end{equation}
For each cohort $c \in \{1, \cdots, C\}$, an optimal nitrogen fertilizer practice $\pi^c_*$ is given by:
\begin{equation}
    \pi^c_* = \argmax{k}~\text{CVaR}_{\alpha}(\nu^c_k)
\end{equation}
Consequently, an optimal identification strategy always assigns nitrogen fertilizer practice $\pi^c_*$ to all farmers belonging to cohort $c$.

\subsubsection{Identification strategies}\label{sec:samplingStrategies}
We expected fertilizer practices to perform differently within each cohort, i.e.\@ each soil. For example, the optimal nitrogen fertilizer practices were expected to be different between a cohort growing maize on a shallow sandy soil and a cohort growing maize on a deep clayey soil. Consequently, the results of one cohort were not supposed to be directly relevant for another cohort. Each soil was considered as an independent identification problem, i.e.\@ had its own independent identification strategy which did not share information with the identification strategies of other soils.

For a given soil, from one season to another, the identification strategy kept memory of all results observed during past seasons, for the same soil. In model simulation experiments, we considered two types of identification strategies: either the standard \texttt{ETC} (Explore-Then-Commit) strategy, previously referred as the ``intuitive strategy'', or \texttt{BCB}, the bandit-algorithm based identification strategy. For the seven soils in Table~\ref{tab:soils}, the identification strategy types were either all \texttt{ETC}, or all \texttt{BCB}, but not a mix of both.

\paragraph{Intuitive identification strategy}\texttt{ETC} provides a simple and intuitive solution to the exploration-exploitation dilemma. During an initial exploration phase of an arbitrary number of years, \texttt{ETC} equiproportionally test all nitrogen fertilizer strategies. Thereafter, the exploitation phase starts and \texttt{ETC} chooses for the remaining time the fertilizer strategy that has shown best performance during the exploration phase. In Section~\ref{sec:ETC} of Supplementary Materials, we provide a simple adaptation of \texttt{ETC} to the batch setting (see Section~\ref{sec:learningProcess}) using the CVaR of rewards rather than the classical expectation. We considered \texttt{ETC-3} and \texttt{ETC-5}, with respectively 3 and 5 years of exploration phases. During the exploration phase, fertilizer practices are randomly assigned in equal proportions to the farmers within the cohort.

\paragraph{Bandit based identification strategy}\texttt{BCB} is a risk-aware bandit algorithm \citep{cassel2018general} which uses the CVaR of rewards as decision criterion, in the batch bandit setting. \texttt{BCB} derives from the the work of \citet{baudry2021optimal}. We provide the pseudo-code of \texttt{BCB} and detail how it works in Supplementary Materials Section~\ref{sec:bcvtsDetailed}. The general idea of the bandit algorithm is, for each season, to leverage the information acquired during all past seasons, such that the algorithm adapts to optimally manage the exploration-exploitation dilemma.


We provide a quick overview of the execution of \texttt{BCB} with algorithm~\ref{algo:bcbPseudoCode}. Considering the YE with $\text{ANE}_{\text{ref}}=15$ kg grain/kg N as results, we set its maximum observable result to 4000 kg/ha for all fertilizer practices as required for the execution of \texttt{BCB} (see first execution step of algorithm~\ref{algo:bcbPseudoCode}), based on Figure~\ref{fig:YEsurface}. As an additional feature, \texttt{BCB} provides a fair distribution of risky option trials amongst farmers at the cohort level. The bandit algorithm ranks each fertilizer practice according to its observed performance in the previous year. The algorithm then recommends first the practices that appear to yield best results to the farmers that have experienced worst results so far.

\begin{algorithm}
	\caption{Simplified pseudo-code of \texttt{BCB}. \label{algo:bcbPseudoCode}}
    \begin{algorithm2e}[H]
    	\For{$\text{\emph{fertilizer practice }} k \in \{1, \cdots, K\}$}{
        	\nosemic Add maximum observable value to the results of fertilizer practice $k$ \tcp*{prior to any experiments}\;
    	}
    	\For{$\text{\emph{season }} t \in \{1, \cdots, T\}$}{
    	    \For{$\text{\emph{farmer }} f \in \{1, \cdots, n\}$}{
    	        \For{$\text{\emph{fertilizer practice }} k \in \{1, \cdots, K\}$}{
    	            \nosemic Re-weight the rewards of the fertilizer practice $k$ with random weights sampled from a Dirichlet distribution \citep{everitt2002cambridge}\;
    	            Score practice $k$ with a noisy empirical measure of the CVaR at level $\alpha$ of practice $k$ from the re-weighted rewards\;
    	        }
    	    \nosemic Recommend to the farmer $f$ the fertilizer practice with the maximum score\;
    	    }
    	    \nosemic Collect and store all results of the season for all fertilizer practices\;
        }
    \end{algorithm2e}
\end{algorithm}

\subsubsection{Direct measure of performance of an identification strategy}\label{sec:empiricalMeasure}
We denote $\widehat{C}_{\alpha}$ the expression of the empirical CVaR of level $\alpha \in (0, 1]$. The empirical CVaR is an estimate of the true CVaR as defined in Equation \ref{eq:cvar} --just as an average value is an estimate of the true mean of a distribution--. Assuming a sample $\cY$ of rewards sorted in an increasing order i.e.\@ $\cY = \{y_1, \cdots, y_n\} \text{~such that~} y_i \leq y_{i+1}$, and defining $q = y_{\lceil\alpha n\rceil}$ the empirical quantile of level $\alpha$, we have:
\begin{equation}
\label{eq:cvaremp}
\widehat{C}_{\alpha}(\cY) := q - \frac{1}{n \alpha}\sum_{i=1}^{n}\max(q - y_i, 0)
\end{equation}
In a simulated problem, the quantity in Equation~\ref{eq:farmerObj} can be estimated by repeatedly applying $R$ times an identification strategy during $T$ years, and then concatenating all results of all farmers from time $t=1$ to time $t=T$ for all replications, and finally computing the empirical CVaR of the resulting set. In order to approximate all expectations, for all experiments, in practice we consider $R=960$ (12 executions in parallel on an 80 core machine; for each one of the 960 experiments, the weather generator had a different random state). We denote $r \in \{1, \cdots, R\}$ the repetition index. We define $\mathfrak{Y}_T=\bigcup_{r=1}^{R} \mathcal{Y}_T^r$ i.e.\@ the results of all farmers until year $T$ for all replications. Then:
\begin{equation}
\label{eq:farmerObjEst}
    \Esp{[\text{CVaR}_{\alpha}(\cY_T)]} \mathrel{\widehat{=}} \widehat{C}_{\alpha}(\mathfrak{Y}_T)
\end{equation}
The resulting quantity is an average measure of the results of the group. The more an identification strategy maximizes this quantity, the better it is. In a real-world problem, only one realization of $\text{CVaR}_{\alpha}(\cY_T)$ is computable.

\subsubsection{Proxy measure of performance of identification strategy}\label{sec:syntheticMeasure}
While the quantity in Equation \ref{eq:farmerObj} can be estimated with Equation \ref{eq:farmerObjEst}, it is intricate to analyze and derive statistical guarantees for this estimator. This is why, in the following, we introduce a proxy of this quantity called the cumulated CVaR regret, which is a central element behind the theoretical performance guarantees of bandit algorithms.
The cumulated regret is also a convenient statistic to represent the performance of an algorithm, with little noise.

\paragraph{Mean cumulated regret of the farmer population}Considering a single cohort $c$, we suppose that we sequentially repeat $T$ times the choice of one option $k$ from an ensemble of $K$ possible options. Here $k$ is the index of the fertilizer practice. We denote $\cvar{\alpha}^{}(\nu^c_k)$ the CVaR of level $\alpha$ associated with the option $k$ and cohort $c$, and ${\cvar{\alpha}^{}(\nu^c_*)=\max\limits_{k \in \{1, \cdots, K\}}\cvar{\alpha}^{}(\nu^c_k)}$ the highest CVaR at level $\alpha$ of all options for cohort $c$ i.e.\@ the CVaR of the best option for cohort $c$. In expectation, for a farmer belonging to cohort $c$ and following $T$ years the recommendations of a given identification strategy selecting a fertilizer practice $k(t)$ each year $t \in \{1, \cdots, T\}$, we define the cumulated regret for the CVaR as in \citet{tamkindistributionally}:

\begin{align}
\underbrace{R^c_{\alpha}(T)}_{\substack{\text{loss of the} \\ \text{strategy}}} &:= \underbrace{T~\times~\cvar{\alpha}^{}(\nu^c_*)}_{\substack{ \text{score of the best} \\ \text{possible strategy}}} - \underbrace{\Esp{\Bigg[\sum_{t=1}^T \cvar{\alpha}^{}(\nu^c_{k(t)})\Bigg]}}_{\substack{\text{score of the actual} \\ \text{strategy}}} \label{eq:regret1} \\
& = \sum_{k=1}^{K} \underbrace{\big(\cvar{\alpha}^{}(\nu^c_*)-\cvar{\alpha}^{}(\nu^c_{k})\big)}_{\substack{\text{loss between the best option} \\ \text{and the option $k$ for cohort $c$}}}~\times~\underbrace{\Esp{[N_k^c(T)]}}_{\substack{\text{expected number of times} \\ \text{option $k$ is chosen for cohort $c$} \\ \text{during the $T$ years}}} \label{eq:regret2}
\end{align}

For cohort $c$, the cumulated regret $R^c_{\alpha}(T)$ can be seen as a loss occurred with the considered strategy with respect to the best possible strategy --the one that always chooses the fertilizer practice with the best CVaR--. Equivalently, it can be interpreted as a measure of the expected total error due to sub-optimal actions made during a series of $T$ decisions: the more the best option is chosen within the $T$ decisions, the smaller the cumulated regret is. The mean cumulated regret of the total farmer population is given by the cumulated regret of each cohort, weighted by the probability of an individual to belong to this cohort:
\begin{equation}
    R_{\alpha}(T) = \sum_{c=1}^C R_{\alpha}^c(T) \times \Pr(c),~\text{with}~ \sum_{c=1}^C \Pr(c) = 1 \label{eq:regretAllCohorts}
\end{equation}
When extensively testing an identification strategy on a simulated problem, the CVaR of the different options can be approximated with a large enough number of samples or analytically computed, irrespective of the identification strategy. For each cohort, this corresponds to the left-hand side of Equation \ref{eq:regret2}, and is thus supposed to be known. Note that, for a real-world problem, these quantities are unknown --else the decision problem would have been solved--. On the right hand side of Equation \ref{eq:regret2}, the quantity $\Esp{[N_k^c(T)]}$ can be empirically approximated by repeatedly performing experiments with the identification strategy, and averaging the number of times each fertilizer practice has been chosen since time step $T$ for each cohort. Finally, in Equation~\ref{eq:regretAllCohorts}, the proportion of each soil, i.e.\@ cohort, can be found in Table~\ref{tab:soils}. Minimizing the cumulated regret maximizes the quantity in Equation \ref{eq:farmerObj}, as shown by \citet{cassel2018general}. For a given identification strategy, the smaller and less variable the mean cumulated regret of population (Equation~\ref{eq:regretAllCohorts}), the more farmers are guaranteed to maximize their CVaR of YE.

\paragraph{Distribution of the cumulated regret of individual farmers}
The mean cumulated regret of the population given in Equation~\ref{eq:regretAllCohorts} does not indicate the distribution of individual farmer regrets. For each farmer $f$ belonging to cohort $c$, the individual regret after $T$ years for the CVaR of level $\alpha \in (0,1]$ is computed as:

\begin{equation}
\label{eq:individualLoss}
\tilde{R}_{\alpha}^{f,c}(T) := \sum_{k=1}^{K} \underbrace{\big(\cvar{\alpha}^{}(\nu^c_*)-\cvar{\alpha}^{}(\nu^c_{k(t)})\big)}_{\substack{\text{loss between the best option} \\ \text{and the option  $k$}}}~\times~\underbrace{N_{k}^{f,c}(T)}_{\substack{\text{number of times option k} \\ \text{is chosen during T years} \\ \text{for farmer $f$}}} 
\end{equation}
For each cohort $c$, the distribution of $\tilde{R}_{\alpha}^{f,c}(T)$ indicates how the potential  losses due to bad recommendations are distributed amongst farmers.

\section{Results}
\label{sec:resultsBatchBandit}
\subsection{Simulated responses to nitrogen fertilizer practices}
\label{sec:yieldDists}
Table~\ref{tab:optimalpractices} provides the statistics of the optimal nitrogen fertilizer practices for each soil type (Table~\ref{tab:soils}), i.e.\@ for each cohort, and Figure~\ref{fig:responses} in Supplementary Materials shows the distribution of grain yield, ANE and YE responses. All responses showed values within the expected ranges for the considered growing conditions, with an average grain yield varying from 3125 kg/ha for a sandy soil with low fertility (\texttt{ITML84105}) up to 3945 kg/ha for a loamy soil (\texttt{ITML84106}). When a applying the most promising fertilization strategies, on average the YE (i.e. yield gain compared to the reference) for farmers ranged from 1200 kg/ha to 1800 kg/ha, and the $\cvar{30\%}(\text{YE})$ (i.e.\@ the mean crop YE of the 30\% worst observable years) from 500 kg/ha to 1032 kg/ha.

There was no simple parametric assumption that could be made about YE, such as its probability distribution to be Gaussian (e.g.\ practice 5 in Figure~\ref{fig:YEDist04}). The thicker left tails for e.g. fertilizer practices 4 and 0 or the bi-modality of YE for practices 6 and 7 (Figure~\ref{fig:YEDist04}), further supported the use of the CVaR as a relevant risk measure. Indeed, the CVaR is most relevant for asymmetric and irregularly shaped distributions, such as thick-tailed or multi-modal distributions. For all soils, the optimal nitrogen fertilizer practices were either nitrogen fertilizer practice 0 or 8 i.e.\@ nitrogen practices without threshold dependent top-dressing, and with a single nitrogen top-dressing application (Table~\ref{tab:optimalpractices}).

The nitrogen fertilizer practices had different responses for the different soil types in terms of the grain yield and ANE (and consequently YE), and ranking of the practices was inconsistent across the soil types (Figure~\ref{fig:responses}). For instance, for the soil \texttt{ITML840104} (silt clay loam of medium fertility), fertilizer practices 0 to 4 had similar YE (Figure~\ref{fig:YEDist04}). For the soil \texttt{ITML840105} (silt clay loam of low fertility), practices 0, 1 and 4 were substantially better than practices 2 and 3 (Figure~\ref{fig:YEDist05}).

Threshold-based fertilizer practices behaved inconsistently across the soil types. As an example, for the bi-modal YE distribution of the fertilizer practice 1, most of the probability density was concentrated around 0 kg/ha for the soil \texttt{ITML840104} (Figure~\ref{fig:YEDist04}) and around 1800 kg/ha for the soil \texttt{ITML840105} (Figure~\ref{fig:YEDist05}). For the soil \texttt{ITML840104} and practice 1, YE were mostly found around 0 kg/ha because most of the seasons, the nitrogen-stress threshold of 0.2 was not reached, and consequently no top-dressing occurred (Table~\ref{tab:practices}). In such cases, only a basal-dressing of 15 kg N/ha was applied, instead of a total of 135 kg N/ha when the top-dressing was triggered. Consistently, for the same soil and fertilizer practice, the probability density of grain yield was concentrated around the low value of 1000 kg/ha (Figure~\ref{fig:yieldDist04}). On the other hand, with the soil \texttt{ITML840105}, most of the seasons, the nitrogen-stress threshold of 0.2 was reached and practice 1 applied both basal and top-dressing. This corresponded to YE mostly found around 1800 kg/ha (Figure~\ref{fig:YEDist05}), and the corresponding grain yields were mostly found around 4000 kg/ha (Figure~\ref{fig:yieldDist05}).

\begin{table}
\caption{Statistics of the optimal nitrogen fertilizer practices for each of the soil types presented in Table~\ref{tab:soils}. For the corresponding optimal nitrogen fertilizer practice $\pi^{*}$, we define $\text{N}^{\pi^{*}}$: quantity of nitrogen fertilizer applied; $\cvar{30\%}(X)$: conditional Value-at-Risk of $X$ of level $30\%$ (Section~\ref{sec:YEandCVaR}); $\bar{X}$: mean value of $X$; $\text{Y}^{\pi^{*}}$: maize grain yield; $\text{ANE}^{\pi^{*}}$: Agronomic Nitrogen use Efficiency; $\text{YE}^{\pi^{*}}$: Yield Excess (Section~\ref{sec:YEandCVaR}); parentheses indicate standard deviations. \label{tab:optimalpractices}}
\begin{tabular}{p{1.7cm}p{.5cm}p{1.6cm}p{2cm}lp{2cm}p{2.2cm}l}
soil & $\pi^*$ & $\bar{\text{N}}^{\pi^{*}}$ & $\cvar{30\%}(\text{Y}^{\pi^{*}})$ & $\bar{\text{Y}}^{\pi^{*}}$ & $\bar{\text{ANE}}^{\pi^{*}}$  & $\cvar{30\%}(\text{YE}^{\pi^{*}})$ & $\bar{\text{YE}}^{\pi^{*}}$ \\
& & (kg/ha) & (kg/ha) & (kg/ha) & (kg/kg) & (kg/ha) & (kg/ha) \ \\ \midrule
\texttt{ITML840101} & \textbf{0}            & 120.0 (1.0)      & 3091             & 3874 (666)       & 30.0  (5.4) & \textbf{1032}    & 1795 (651)    \\
\texttt{ITML840102} & \textbf{8}            & 69.8  (4.0)      & 2391             & 3150 (653)       & 33.2  (7.5) & \textbf{652}     & 1270 (529)    \\
\texttt{ITML840103} & \textbf{8}            & 70.0  (0.4)      & 2539             & 3152 (526)       & 34.4  (6.8) & \textbf{808}     & 1356 (475)    \\
\texttt{ITML840104} & \textbf{8}            & 69.9  (2.7)      & 2533             & 3339 (682)       & 31.7  (8.1) & \textbf{500}     & 1169 (565)    \\
\texttt{ITML840105} & \textbf{8}            & 70.0  (1.2)      & 2467             & 3127 (570)       & 34.2  (7.3) & \textbf{757}     & 1346 (508)    \\
\texttt{ITML840106} & \textbf{0}            & 120.0 (1.2)      & 3132             & 3945 (695)       & 28.9  (5.5) & \textbf{900}     & 1667 (660)    \\
\texttt{ITML840107} & \textbf{8}            & 69.9  (2.7)      & 2472             & 3247 (659)       & 32.5  (8.0) & \textbf{565}     & 1226 (559)  
\end{tabular}
\end{table}
\subsection{Identification of best fertilizer practices}
\label{sec:resultsBanditIdentification}
\noindent In Section~\ref{sec:empiricalPerformances} and~\ref{sec:regretPerformances}, we present respectively a direct measure of empirical performances of the nitrogen fertilizer practice identification strategies (see Section~\ref{sec:empiricalMeasure}), and the regret as a proxy measure, both for the farmer population average and the individual farmer regret distribution (see Section~\ref{sec:syntheticMeasure}). Section~\ref{sec:samplingVisualization} provides a visual comparison of nitrogen fertilizer recommendations following respectively the \texttt{BCB} and \texttt{ETC-5} identification strategies.

\subsubsection{Sampling visualization}
\label{sec:samplingVisualization}
Figures~\ref{fig:samplingProportions} provides the average frequency with which the fertilizer practices were selected by the identification strategies, from the beginning of the experiment to time $T$, for soils \texttt{ITML840105} and \texttt{ITML840101}. For the soil \texttt{ITML840105}, respectively for the \texttt{BCB} and \texttt{ETC-5} strategies. After 20 years, \texttt{BCB} had selected the fertilizer practice 8, which was the optimal one (see Table~\ref{tab:optimalpractices}), with an average proportion of 50\%. The proportions of the optimal practice continuously increased from year 2 onwards (Figure~\ref{fig:BCTVSsampling05}). During the first 5 years, \texttt{ETC-5} uniformly sampled all fertilizer practices (Figure~\ref{fig:ETC3sampling05}), thus inducing potentially high losses for farmers. The proportion of the optimal practice started to increase from year 5 onwards. After year 20, \texttt{ETC-5} sampled the optimal practice with an average proportion of 31\%. For soil \texttt{ITML840101}, results are more contrasted. After year 20, both \texttt{BCB} has sampled the optimal strategy, which was fertilizer practice 0 (see Table~\ref{tab:optimalpractices}) with an average proportion of 27\% (Figure~\ref{fig:BCTVSsampling01}) and \texttt{ETC-5} (Figure~\ref{fig:ETC3sampling01}) with an average proportion of 26\%. Note that in Figures~\ref{fig:BCTVSsampling01} and~\ref{fig:ETC3sampling01}, the color differences are almost not perceptible for nitrogen fertilizer practices 0, 1 and 4, because all three practices showed similar performances. In Sections~\ref{sec:empiricalPerformances} and~\ref{sec:regretPerformances}, we provide the results of statistics that account for all cohorts, i.e.\@ soils.

\begin{figure}
	\centering
	\begin{subfigure}[t]{0.5\textwidth}
		\centering
		\includegraphics[width=\textwidth]{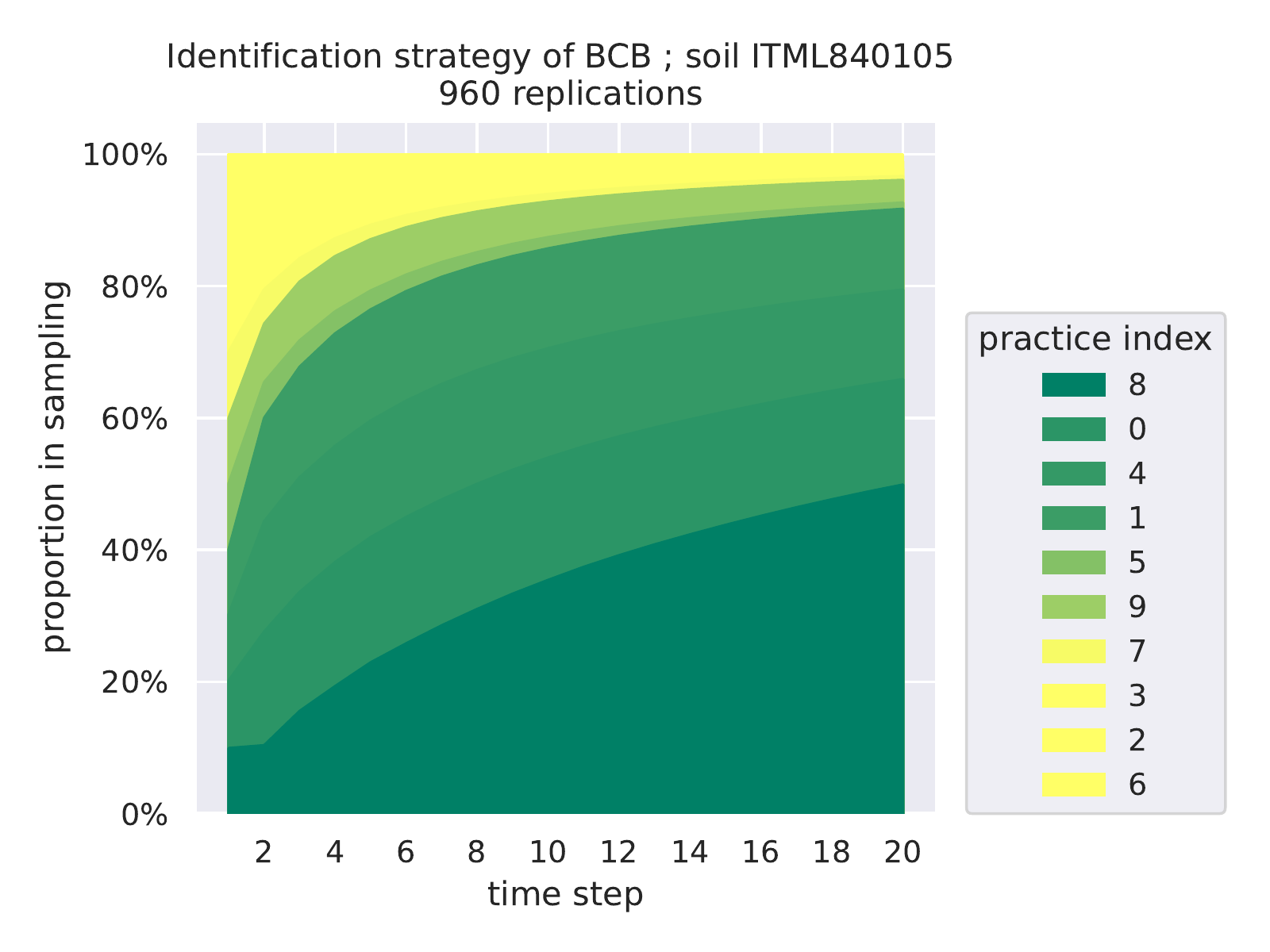}
		\caption{\texttt{BCB} sampling proportions for soil \texttt{ITML840105}. \label{fig:BCTVSsampling05}}
	\end{subfigure}%
	\begin{subfigure}[t]{0.5\textwidth}
		\centering
		\includegraphics[width=\textwidth]{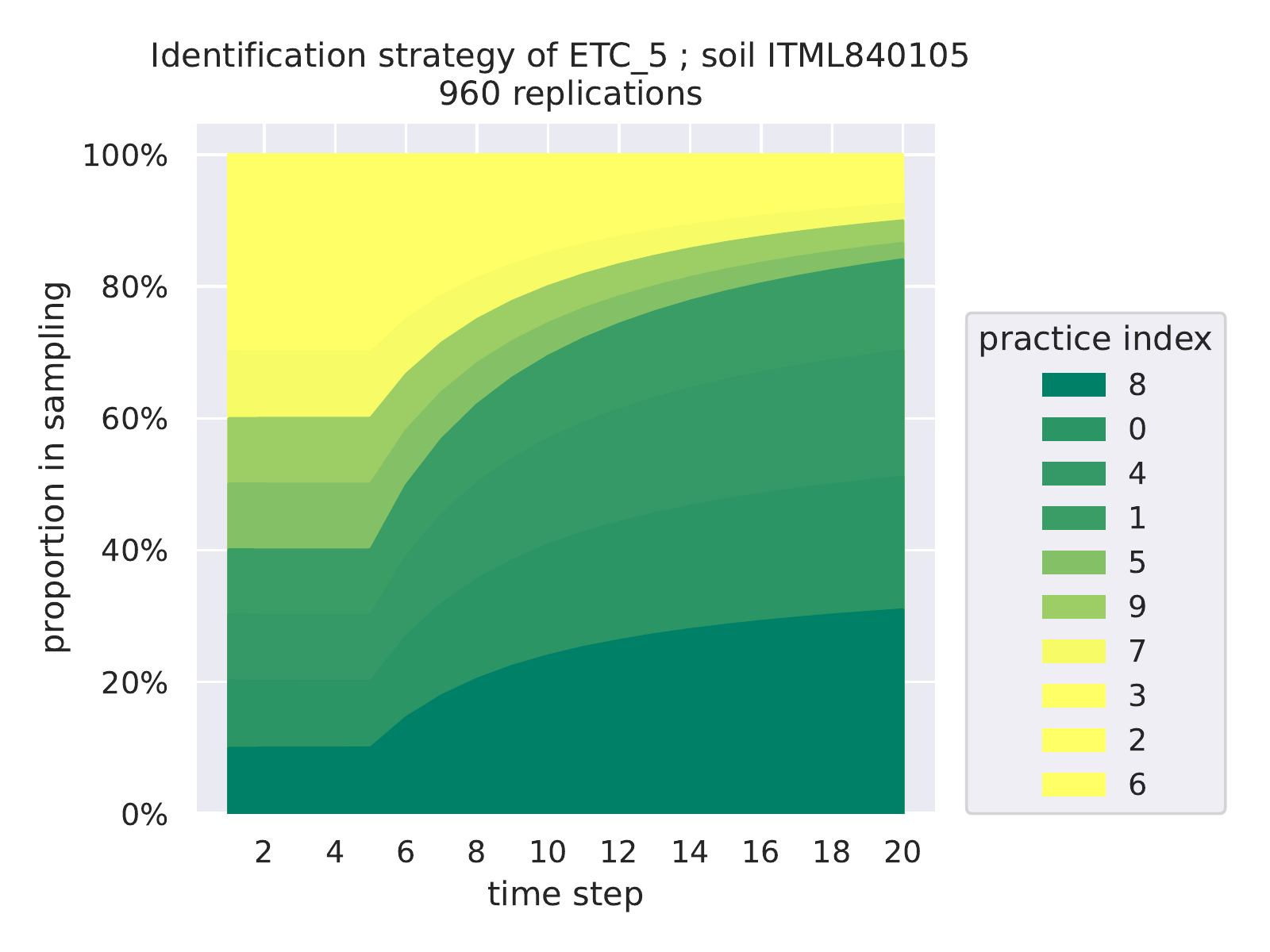}
		\caption{\texttt{ETC-5} sampling proportions for soil \texttt{ITML840105}. \label{fig:ETC3sampling05}}
	\end{subfigure}%
	\hfill
	\begin{subfigure}[t]{0.5\textwidth}
		\centering
		\includegraphics[width=\textwidth]{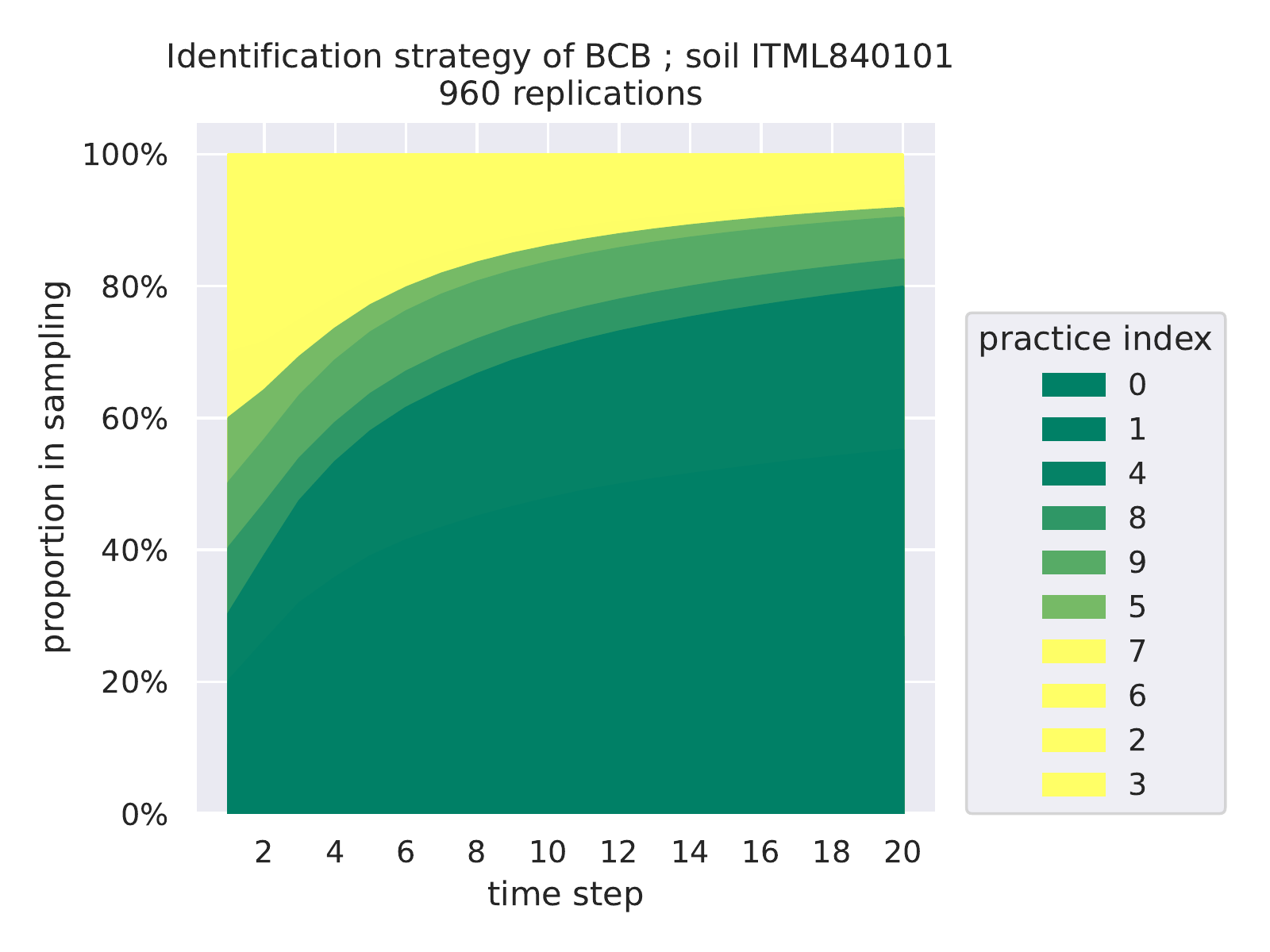}
		\caption{\texttt{BCB} sampling proportions for soil \texttt{ITML840101}. \label{fig:BCTVSsampling01}}
	\end{subfigure}%
	\hfill
	\begin{subfigure}[t]{0.5\textwidth}
		\centering
		\includegraphics[width=\textwidth]{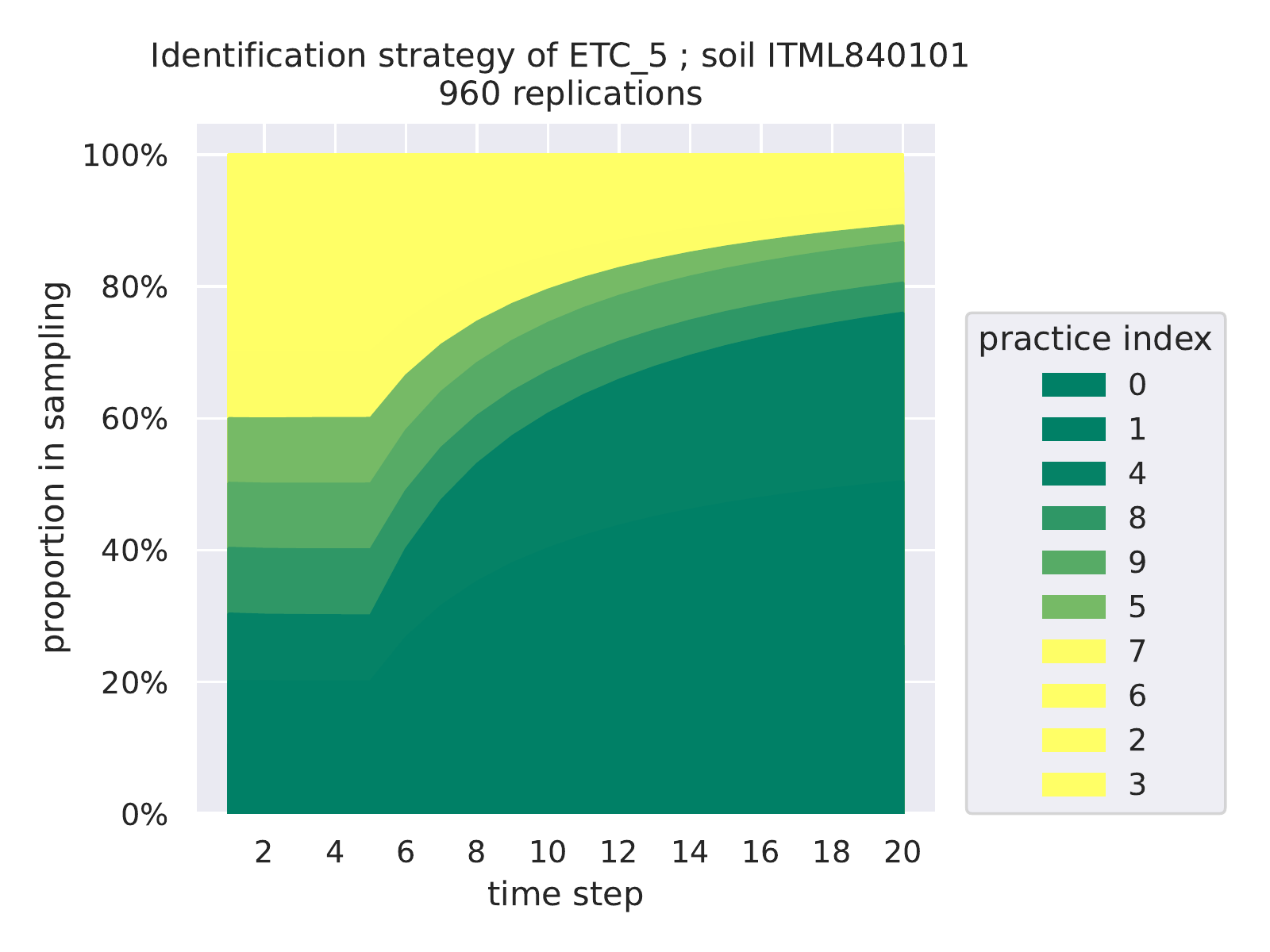}
		\caption{\texttt{ETC-5} sampling proportions for soil \texttt{ITML840101}. \label{fig:ETC3sampling01}}
	\end{subfigure}%
\caption{Averaged sampling proportions for soils \texttt{ITML840105} and \texttt{ITML840101}, $T=20$ years. 960 replications of the whole experiment were done. The fertilizer practices are ordered according to the true Conditional Value-at-Risk at level 30\% (CVaR) of their Yield Excess (YE) with $\text{ANE}_{\text{ref}}$=15 kg grain/kg N ; the greener the color, the better a fertilizer practice is. \label{fig:samplingProportions}}
\end{figure}

\subsubsection{Direct measure of performances of identification strategies}
\label{sec:empiricalPerformances} Figure~\ref{fig:populationRegret} represents the evolution of the $\cvar{30\%}(\text{YE})$ for all cohorts trough the years (Equation~\ref{eq:farmerObjEst}). On average, farmers following the nitrogen fertilizer recommendations based on the \texttt{BCB} strategy had higher empirical CVaR at 30\% of YE than farmers following those from ETC strategies, from the second year of the experiment onwards (Figure~\ref{fig:populationEmpiricalCvar}). The difference in performance between \texttt{BCB} and \texttt{ETC} is high during the initial years. For instance, at year 4, farmers following recommendations from the \texttt{BCB} identification strategy had a CVaR at 30\% of YE of 318 kg/ha, compared to 168 kg/ha (47\% less than \texttt{BCB}) and 74 kg/ha (77\% less than \texttt{BCB}) for farmers following the recommendations respectively from the \texttt{ETC-3} and the \texttt{ETC-5} identification strategies. \texttt{BCB} allowed to identify faster the optimal fertilizer practices and consequently further avoided low crop yield outcomes compared to ETC strategies. ETC strategies were adversely affected by their exploration phases during which all fertilizer practices were equiproportionally tested. In contrast, \texttt{BCB} had a continuously increasing empirical CVaR, for the whole duration of the experiment.

\begin{figure}
    \centering
    \includegraphics[width=.65\textwidth]{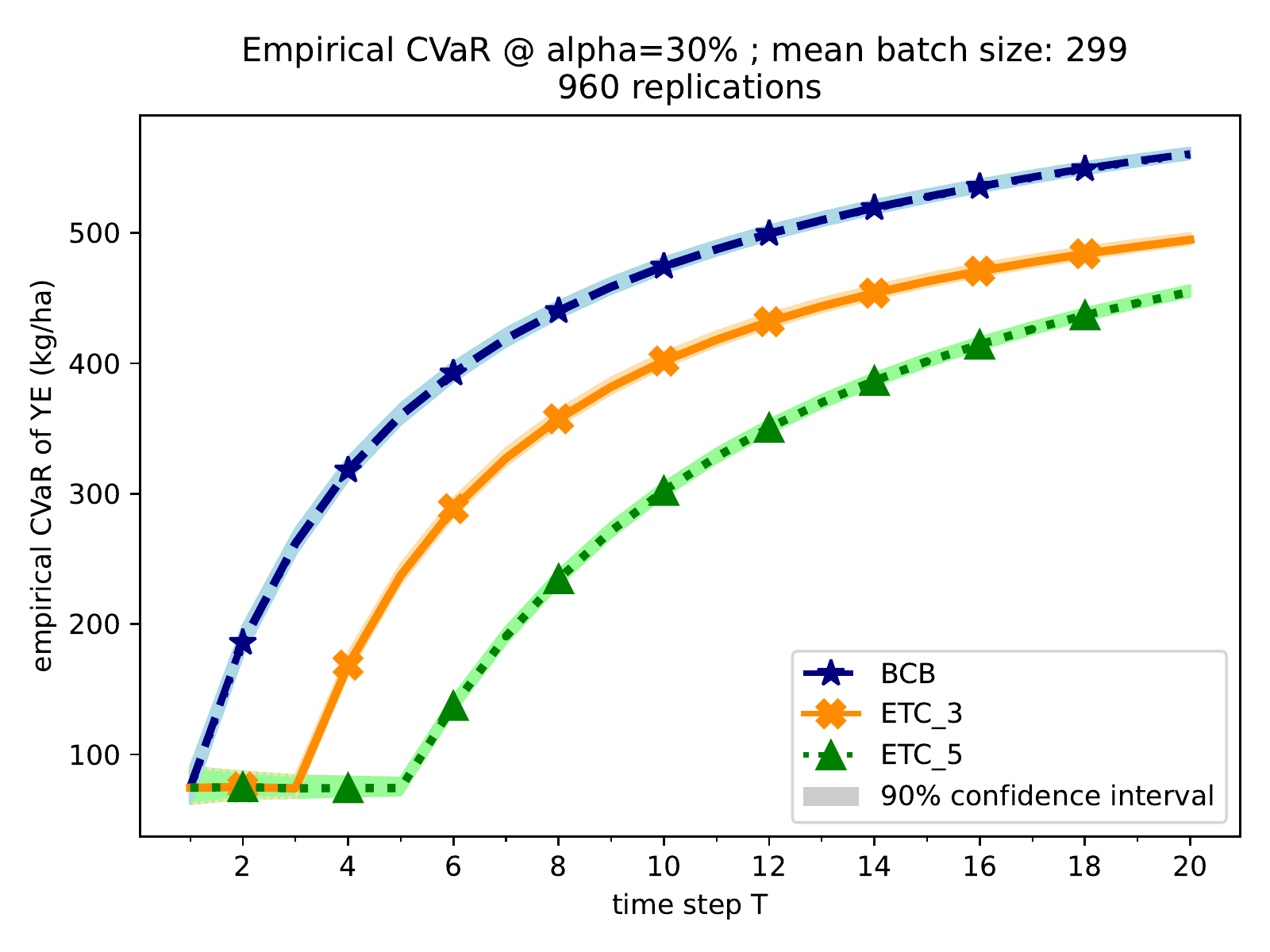}
    \caption{Empirical conditional Value-at-Risk (CVaR)  at level 30\% (CVaR) of maize yield excesses (YE) between $T=0$ and the considered $T$ ; $\text{ANE}_{\text{ref}}=15$ kg grain/kg N. 960 replications of the whole experiment were done. One time step $T$ is one year ; `mean batch size' is the number of farmers who have volunteered to participate at the trials, averaged over all years and all replications. Confidence intervals were computed following \citet{thomas2019concentration}.}
    \label{fig:populationEmpiricalCvar}
\end{figure}

\subsubsection{Regret}
\label{sec:regretPerformances}
\paragraph{Mean cumulated regret of the farmer population}
Figure~\ref{fig:populationRegret} represents the evolution of the mean regret for all cohorts trough the years (Equation~\ref{eq:regretAllCohorts}). For $\alpha=30\%$, \texttt{BCB} identification strategy outperformed \texttt{ETC} strategies, regardless of the number of years during which the strategy was applied. The difference in performance between \texttt{BCB} and \texttt{ETC} increases for the whole duration of the experiments. After 20 years, farmers following recommendations from \texttt{BCB} identification strategy experiences a mean cumulated regret of 2400 kg/ha, compared to 3385 kg/ha (41\% more than \texttt{BCB}) and 3701 kg/ha (54\% more than \texttt{BCB}) for farmers following the recommendations respectively from the \texttt{ETC-3} and \texttt{ETC-5} strategies. Consequently, farmers following \texttt{BCB} recommendations accumulated less regret compared to farmers following \texttt{ETC} recommendations. Furthermore, the variance of the cumulated regret (due to all different weather series in the experiments, for each season and each field trial, and the variability in cohorts each year) was smaller for \texttt{BCB} than for \texttt{ETC}, confirming that \texttt{BCB} strategy was more robust (see quantile ranges in Figure~\ref{fig:populationRegret}) for this decision problem.

\begin{figure}
    \centering
    \includegraphics[width=.65\textwidth]{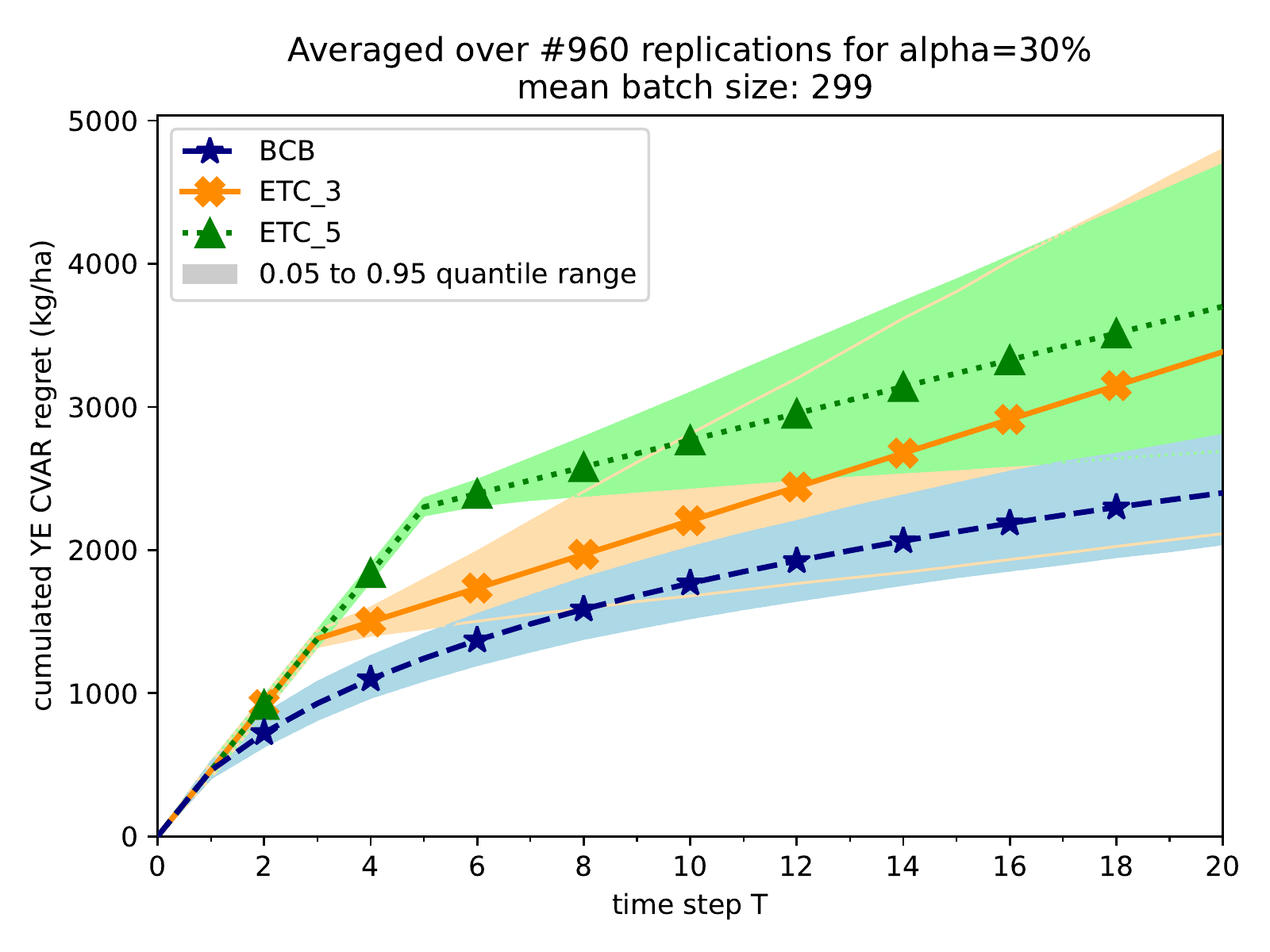}
    \caption{Mean cumulated regret of population, for the Conditional Value-at-Risk (CVaR) at level 30\%  of Yield Excess (YE); $\text{ANE}_{\text{ref}}=15$ kg grain/kg N. The cumulated cumulated regret is averaged over the farmers' population, between $T=0$ and the considered $T$. 960 replications of the whole experiment were done. One time step $T$ is one year, `mean batch size' is the number of farmers who have volunteered to participate in the trials, averaged over all years and all replicates. \label{fig:populationRegret}}
\end{figure}

\paragraph{Individual cumulated regret distribution}
\texttt{BCB} prevented farmers from accumulating large individual cumulated regret during the participatory identification of the group (Figure~\ref{fig:individualRegret}): individual cumulated regrets for \texttt{BCB} were distributed towards lower values than for \texttt{ETC} strategies. With \texttt{BCB}, almost no individual cumulated regret was greater than 7.5 t/ha after 20 years, as opposed to \texttt{ETC} strategies. Consequently, \texttt{BCB} allowed a fairer sharing of identification mistakes in the population of farmers than \texttt{ETC} strategies.

\begin{figure}
	\centering
	\includegraphics[width=.65\textwidth]{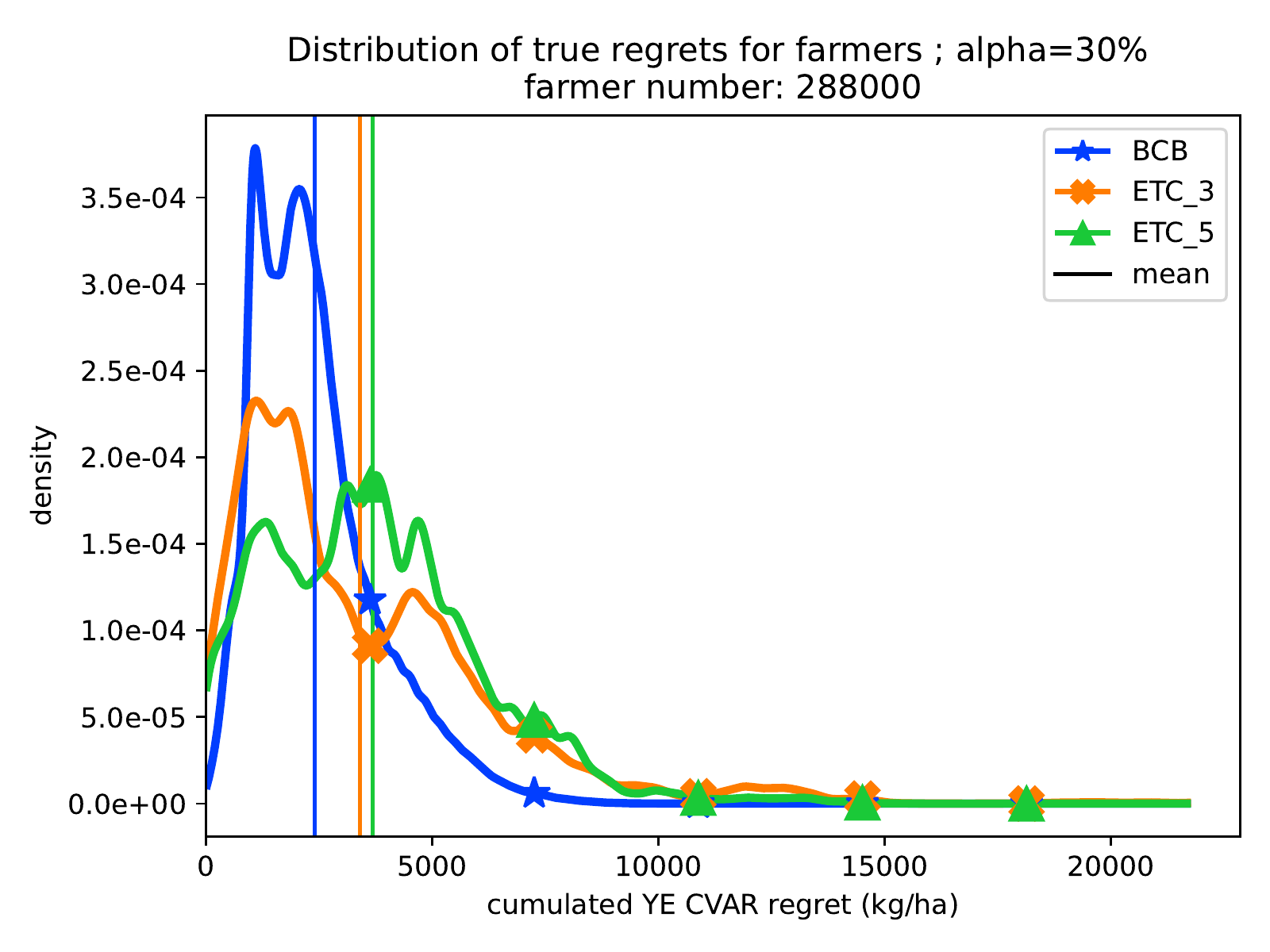}
	\caption{Distribution of individual cumulated regret after $T=20$ years for Conditional Value-at-Risk at level 30\% (CVaR) of the yield excess (YE) ; $\text{ANE}_{\text{ref}}=15$ kg grain/kg N. The total number of farmers corresponds to a group of 300 farmers, with 960 replications of the whole experiment.}
	\label{fig:individualRegret}
\end{figure}

\subsubsection{Sensitivity analysis}
\label{sec:sensitivityAnalysis}
In Section~\ref{sec:additionalExpBatchBandit} of Supplementary Materials, we present the same results than Sections~\ref{sec:empiricalPerformances} and~\ref{sec:regretPerformances} for higher CVaR levels of  $\alpha=50\%$ and $\alpha=100\%$. The CVaR with the latter level recovers the usual expectation. For $\alpha=50\%$, \texttt{BCB}  showed similar performance than for $\alpha=30\%$. For $\alpha=100\%$, \texttt{ETC-3} was the best performer, \texttt{BCB}  and \texttt{ETC-5} performed similarly. Nonetheless, \texttt{BCB}  showed a smaller variance than both \texttt{ETC-3} and \texttt{ETC-5}. The theoretical performance guarantee is presented in Section~\ref{sec:demoBatchBandit} of Supplementary Materials.
\section{Discussion}
\label{sec:discussionBatchBandit}
\subsection{Benefits from an adaptive identification strategy.}
\label{sec:benefAdaptDiscussBatchBandit}
\paragraph{Practical perspective}In multi-year multi-location on-farm trials,  participating farmers simultaneously conduct field experiments with crops over multiple seasons to compare e.g. crop management practices \citep[e.g.\@][]{naudin2010impact, baudron2012comparative, falconnier2016unravelling}. After a given number of years, results (often in terms of crop yields) are typically analyzed using mixed linear models \citep{laird1982random}, to take into account the design of an experiment with repeated measures, such as random effects associated with fields and farms. Best crop management practices are then identified based on this statistical analysis. In our simulated nitrogen fertilizer practice decision problem, we approximated multi-year on-farm trials with the \texttt{ETC} intuitive identification strategy. Both replicated on-farm trials and \texttt{ETC} consist of an exploration phase of a fixed duration (data collection), followed by an exploitation phase (application of the best identified practice after analysis of collected data). Consequently, both replicated on-farm trials and \texttt{ETC} can be considered as non-adaptive identification strategies: before the end of the exploration phase, the intermediary results are not exploited to gradually refine the experimental setup. In contrast, \texttt{BCB} refines its the recommendations every year, based on the results observed so far. The better a crop management option, the more its proportion in recommendations should increase with time. From a farmer's perspective, this mean that the probability of highly sub-optimal recommendation decreases with time, as opposed to non-adaptive identification strategies during the exploration phase, which equi-proportionally recommend all crop management practices. Because with the bandit-algorithm-based identification strategy yield losses are reduced in most cases, compared to the non-adaptive identification strategies, the cost of the identification of best management practices is likely to be reduced for the farmers. Another common method to generate crop management recommendation consists in the use of calibrated crop models and scenario analyses \citep[e.g.\@][]{huet2022coping}. This method can be complementary to the bandit-based approach. Candidate crop management practices can first be determined based on crop modeling results for the considered conditions, and then best crop management can be identified with the bandit algorithm..

\paragraph{Theoretical perspective}\texttt{ETC} is theoretically proven to be a sub-optimal identification strategy without a calibration of the duration of the exploration phase that requires unavailable strong prior knowledge on the complexity of the decision problem \citep[Chapter~6]{lattimore2020bandit}. In numerical experiments, for $\alpha=100\%$, \texttt{ETC-3} best performed, probably because with these particular YE distributions and size of farmer group, 3 years of exploration was an optimal number. A slight change in the decision problem may induce that 3 or 5 years of exploration phase are no longer optimal (e.g.\@ changing $\alpha$ to $30\%$ or $50\%$). More generally, prior to an experiment, there is no guarantee than an arbitrate number of years of exploration of \texttt{ETC} will be optimal, and consequently there are no guarantees about the performance \texttt{ETC}, as opposed to \texttt{BCB} (see theoretical results in Section~\ref{sec:demoBatchBandit}). The main benefit of \texttt{BCB} over \texttt{ETC} is that it does not require a choice on parameters that require prior knowledge that is \textit{a priori} not available. \texttt{BCB} neither requires strong assumptions about probability laws of reward distributions, as opposed to other common bandit algorithms. The only requisite for \texttt{BCB} is the knowledge of the maximum observable reward. In agronomy, such knowledge is easily available with expert knowledge: for instance, considering yield as reward, an expert can easily estimate a yield potential in the given crop growing conditions, for instance through modeling \citep{affholder2013yield}.

\subsection{Performances of fertilizer practices}
\label{sec:fertiPolDiscussBatchBandit}
For all soils, no optimal nitrogen fertilizer practice was threshold-based, nor shown split top-dressing. This does not discard a potential benefit from the threshold-based fertilizer practices, or split top dressing. Model simulations revealed that the effect of the nitrogen-stress or rainfall threshold depended on each soil, and the effect of the thresholds was not easy to anticipate. Consequently, the definition of the set of candidate fertilizer practices to explore must be carefully selected within the vast possible combination of practice parameters, e.g.\@ application timing, rates, thresholds or number of split. In the experiments, the optimal values of practice parameters were not adjusted, because our focus was on designing a better generic identification method, rather than on designing refined parametrized fertilizer practices. For an application in real field conditions, we recommend these parameters to be adjusted based on simulations (see Section~\ref{sec:benefAdaptDiscussBatchBandit}) and/or on prior small test plots. More generally, the design of fertiliser practices must include experts, local extensionists and farmers themselves \citep{cerf2006outils, hochman2011emerging}. For instance, the maximum quantity of nitrogen fertilizer a farmer can apply may depend on the availability of fertilizer in the local market, and on the economic situation of each farmer.

\subsection{Definition of farmers' objective}
\label{sec:discussFarmersObj}
We defined the farmers' objective as maximizing the CVaR at level $\alpha=30$ of the YE with $\text{ANE}_\text{ref}=15$ kg grain/kg N. This quantity is interpretable as it represents a yield gain compared to a reference fertilizer practice, and it is easily calculable. The value of $\alpha$ allows to adjust the risk aversion level for a cohort of farmers. The value of $\text{ANE}_\text{ref}$ defines an invariant economical and environmental trade-off which penalizes more or less the use of nitrogen fertilizer. Losses were defined as the expected performance difference between the best available nitrogen fertilizer practice, and the sub-optimal nitrogen fertilizer practices, in the face of the seasonal uncertainty.

However, we did not directly evaluate fertilizer practices by their economic return. Despite market risks being a reality, the economic return of maize nitrogen fertilization depends on many parameters changing through time, such as fertilizer subsidies, fertilizer market price, application costs, or harvest selling price. Because each year, the optimal nitrogen fertilizer practice is likely to change, such setting dramatically increases the complexity of the identification problem, and so does the required amount of data to identify best practices (we provide more details in Supplementary Materials, Section~\ref{sec:YEcomplement}). In any case, modelers should bear in mind the inherent limitations of the modeling of a farmer’s objective, which always remains a proxy \citep{mccown2002changing}.

\subsection{Limits and possible improvements}
\label{sec:LimitsDiscussBatchBandit}
In our simulated crop management decision problem,  we largely simplified the experimental structure of multi-year replicated field trials.  First, for all simulations, weather time series were independent and identically distributed. Such assumption is unlikely to be true in the real world. During the same year, weather spatial correlations can be high, for instance in case of extreme weather events \citep{tack2016influence}. Second, within the same cohort, all farmers were supposed to have exactly the same soil and cultivar, and to implement closely the fertilizer practice they were assigned. For real application, a farmer’s soil, site, year and other potential random effects should be properly considered. The bandit identification strategy we introduced should be extended to account for experimental structure and multiple factors at stake. For instance, contextual bandits \citep{lattimore2020bandit}, which would allow to share information between decision contexts (here, the cohorts), might offer solutions.

As another limit, in simulations, we considered climate to be the same during the 20 years of the experiment. Such hypothesis is improbable in real conditions  \citep[e.g.\@][]{traore2017modelling}. Nevertheless, as \citet{adam2020more} has shown based on simulations, in Mali, improving current crop management, in particular nitrogen fertilization, may compensate the long-term effects of climate change, while addressing the urgent necessity of closing yield gaps. For a decision problem perspective, if climate changes through time, then optimal practices are likely to change with time. Such problem can be formalized as a non-stationary bandit problem \citep{lattimore2020bandit}.  To handle non-stationary, BCB can be equipped with a sliding window \citep{garivier2011upper, baudry2021limited}.  This mechanism forces the bandit algorithm  to overlook observations older than a given number of years, which consequently must regularly re-evaluate all fertilizer practices. Such approach reiterates the recommendations formulated by \citet{adam2020more}: the bandit algorithm would handle climate change by regularly trying to improve current fertilizer practices.
\section{Conclusion}
\label{sec:conlusion}
We addressed the problem of the identification of best maize fertilizer practices, supported by virtual trials of a community of smallholder farmers. Our goal was to provide an identification strategy that minimized farmers’ yield losses occurring during field trials, compared to the ‘intuitive strategy’ which consisted of multi-year field trials with an equal proportion of each fertilizer practice tested each growing season. In simulated experiments mimicking the conditions of southern Mali as a case study, the bandit-based identification strategy we introduced showed better resulted in reducing farmers’ yield losses. We used model simulations to compare the identification strategies, but the bandit-based identification strategy does not depend on the simulations. This novel approach opens up new perspectives as an alternative to the usual multi-year on-farm trials. It can also complement the use of calibrated crop simulation models for the formulation of fertilizer recommendations. For instance, if parameterized fertilizer practices are explored, the choice of the parameters could be made based on prior simulations and/or small-scale field trials.

However, before confirming that the bandit-based identification strategy can be employed to identify best management practices in real conditions, several constraints must be addressed: (i) the structure of multi-year on-farm field trials with repeated measures and thus correlations, such that the effect, for the same year in a given region, of spatial correlation of weather series; (ii) the effect of climate change; (iii) the effect of farmer compliance, who may not strictly follow the fertilizer practice recommendations, which may induce extra noise. We briefly indicated how our approach can be extended to address the aforementioned constraints.
\section*{Software availability}
\noindent All the numerical experiments in this paper are meant to be as reproducible as possible, and the code is open source. The Python code with the necessary packages, instructions and experimental data are provided in the following public GitLab repository: \url{https://gitlab.inria.fr/rgautron/batch-cvts/-/tree/master}. The simulations are performed with \texttt{gym-dssat} (\url{https://gitlab.inria.fr/rgautron/gym_dssat_pdi}), a modified version of the Decision Support System for Agrotechnology Transfer (DSSAT) software (\url{https://dssat.net/}).
\bibliography{biblio.bib}

\clearpage
{\center \Huge \textbf{Supplementary Materials}}
\begin{appendix}
\counterwithin{figure}{section}
\counterwithin{table}{section}
\counterwithin{algorithm}{section}
\counterwithin{equation}{section}

\section{Maize simulations}
\label{sec:simulationsBatchBandit}
\paragraph{}The cultivation scenarios were based on the the conditions found in Southern Mali. The soils came from \citet{adam2020more} who compiled and supplemented with survey data the soils found in the literature for the location of Koutiala, Mali. The data of \citet{adam2020more} included soils' depth, texture, water capacity, bulk density, organic matter content, pH and initial mineral nitrogen content. Soil characteristics and proportions in the population were summarized in Table~\ref{tab:soils}, based on \citet{adam2020more}. During the simulations, the weather times series were generated using the WGEN weather model \citep[see][]{richardson1984wgen, soltani2003statistical}. WGEN had been calibrated on 40 year long historical daily weather records from a weather station located in N'Tarla found in \citet{ripoche2015cotton}, which was located about 20 km from Koutiala ; these historical weather records were the best available. The cultivars used in the simulation and its parametrization in DSSAT are presented in Table \ref{tab:cultivarsBatchBandit} ;  this cultivars comes with DSSAT default data and was representative of the cultivars used in Mali. The cultivars were already calibrated based on experiments carried out in Mali. The simulations were initiated on Day Of Year (DOY) 140 and the planting is automatically performed in a window ranging from DOY 155 to 185 ; we specified the parameters of the automatic planting with Table \ref{tab:automaticPlanting}. For each soil, the initial soil nitrogen content was set according to the values found in \citet{adam2020more}. The soil water content was set to crop lower limit, as a result of the end of the dry season at the usual planting dates. Because the simulations were initiated prior to planting date and because the weather was stochastically generated, the soil nitrogen mineral and water contents were uncertain at planting time. Each simulation was performed independently from the previous ones. At the beginning of the experiment, all the soils described in Table ~\ref{tab:soils} were randomly distributed amongst the initial group of farmers following the proportions provided in Table~\ref{tab:soils}. Figure~\ref{fig:responses} shows the simulated yield distributions for \texttt{ITML840104} and \texttt{ITML840105} soils. 

\newcommand{\widthFactorDists}{.35}
\begin{figure}
	\centering
		\begin{subfigure}[t]{\widthFactorDists \textwidth}
		\centering
		\includegraphics[width=\textwidth]{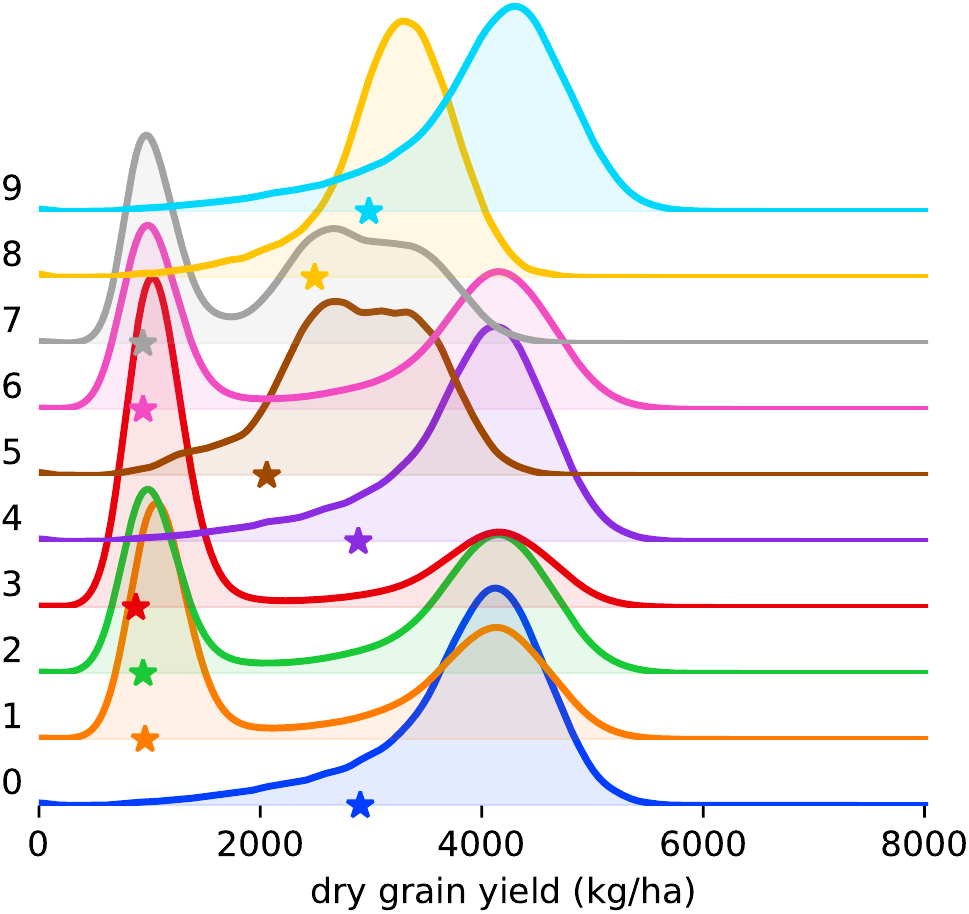}
		\caption{Yield distributions for soil \texttt{ITML840104}. Stars represent the CVaR at level 30\%.}
		\label{fig:yieldDist04}
	\end{subfigure}%
	\hfill%
	\begin{subfigure}[t]{\widthFactorDists \textwidth}
		\centering
		\includegraphics[width=\textwidth]{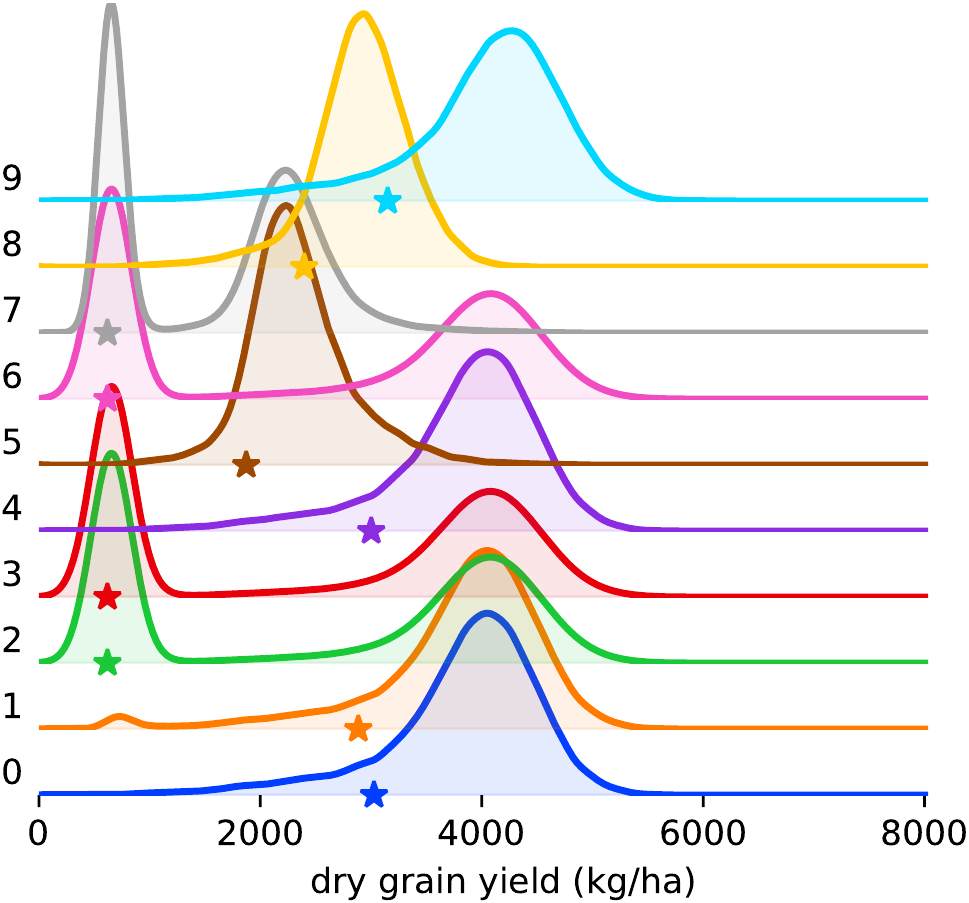}
		\caption{Yield distributions for soil \texttt{ITML840105}. Stars represent the CVaR at level 30\%.}
		\label{fig:yieldDist05}
	\end{subfigure}
	\hfill%
	\begin{subfigure}[t]{\widthFactorDists \textwidth}
		\centering
		\includegraphics[width=\textwidth]{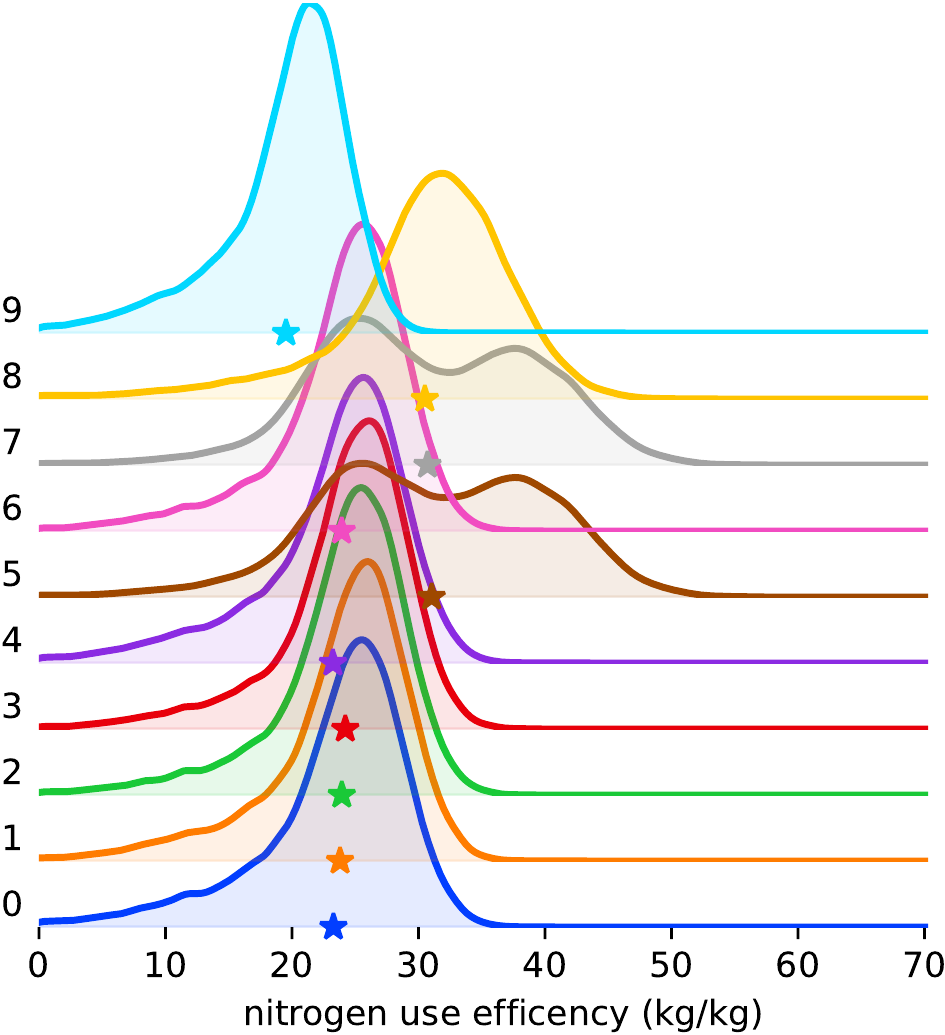}
		\caption{Agronomic Nitrogen Efficiency (ANE) distributions for soil \texttt{ITML840104}. Stars represent the mean value.}
		\label{fig:effDist04}
	\end{subfigure}%
	\hfill%
	\begin{subfigure}[t]{\widthFactorDists \textwidth}
		\centering
		\includegraphics[width=\textwidth]{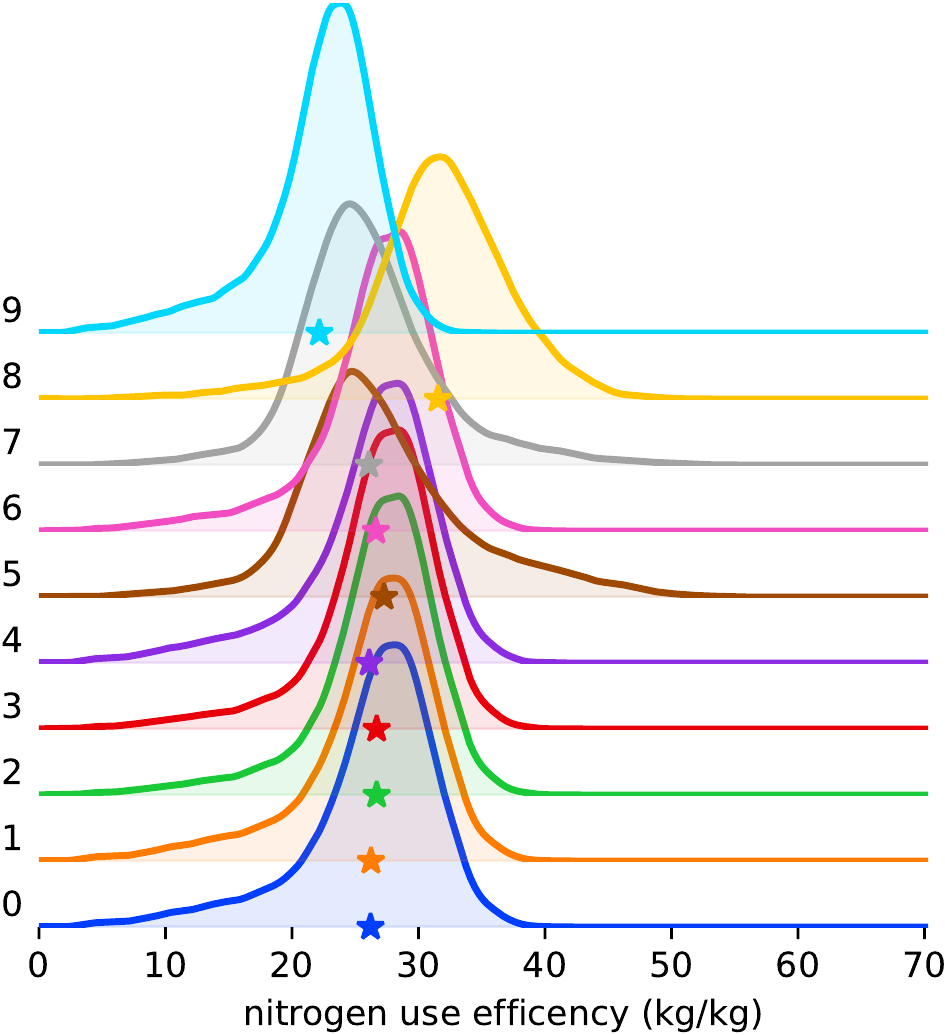}
		\caption{Agronomic Nitrogen Efficiency (ANE) distributions for soil \texttt{ITML840105}. Stars represent the mean value.}
		\label{fig:effDist05}
	\end{subfigure}
	\label{fig:yieldDists}
	\begin{subfigure}[t]{\widthFactorDists \textwidth}
    	\centering
    	\includegraphics[width=\textwidth]{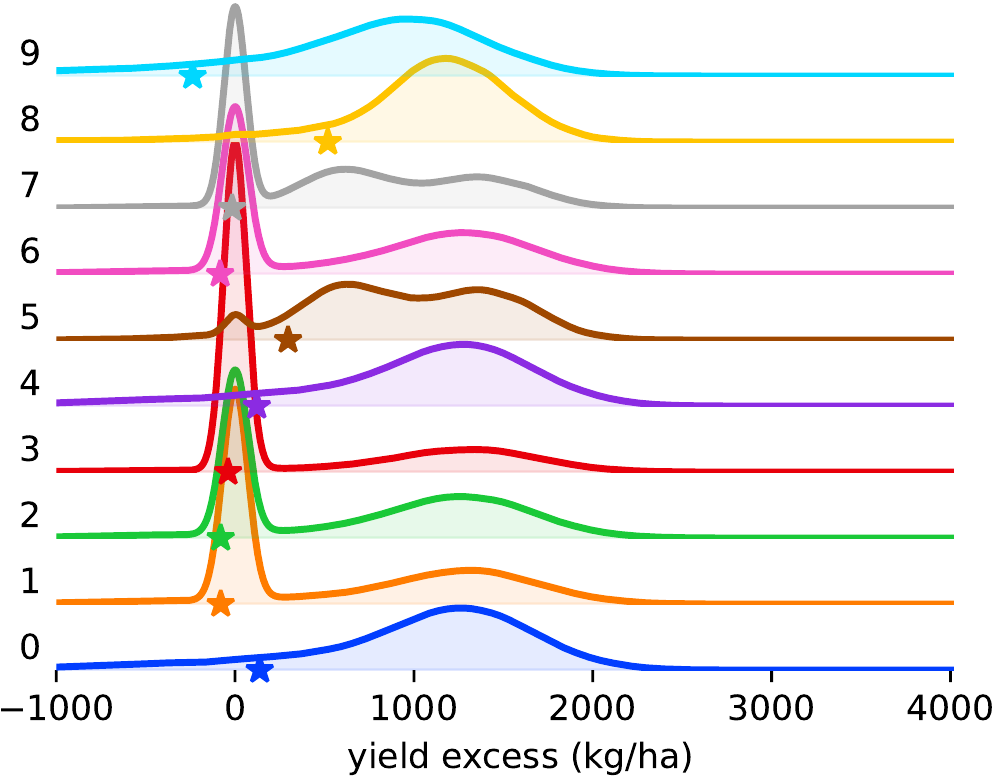}
    	\caption{Yield Excess (YE) distributions for soil \texttt{ITML840104} with  $\text{ANE}_{\text{ref}}$=15 kg grain/kg N. Stars represent the CVaR at level 30\%.}
    	\label{fig:YEDist04}
    \end{subfigure}%
	\hfill%
	\begin{subfigure}[t]{\widthFactorDists \textwidth}
		\centering
		\includegraphics[width=\textwidth]{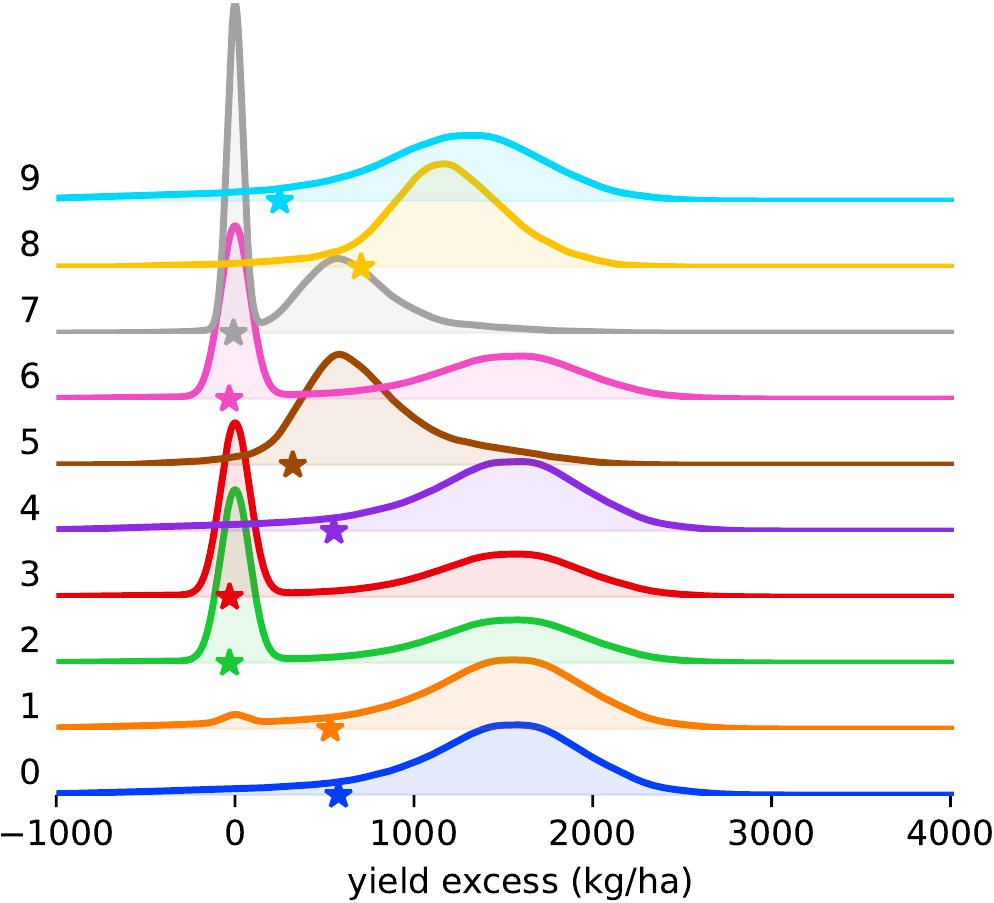}
		\caption{Yield Excess (YE) distributions for soil \texttt{ITML840105} with  $\text{ANE}_{\text{ref}}$=15 kg grain/kg N. Stars represent the CVaR at level 30\%.}
		\label{fig:YEDist05}
	\end{subfigure}
	\caption{Simulated impact of maize fertilizer practices on grain yield, Agronomic Nitrogen use Efficiency (ANE), Yield Excess (YE) for $10^5$ hypothetical years using a weather generator. Maize cultivar was the same for all simulations. Practices indexes are indicated on the left-hand side of each sub-figure. \label{fig:responses}}
\end{figure}

\begin{table}[H]
\begin{center}
\caption{Maize cultivar parametrization in DSSAT \label{tab:cultivarsBatchBandit}}
	\begin{tabular}{ llllllll }
		\textbf{name} & \textbf{ecotype} & \textbf{P1} & \textbf{P2} & \textbf{P5} & \textbf{G2} & \textbf{G3} & \textbf{PHINT} \\ \midrule
		Sotubaka & \texttt{IB0001} & 300.0 & 0.520 & 930.0 & 500.0 & 6.00 & 38.90 \\
	\end{tabular}
\end{center}
\end{table}


\begin{table}[H]
\begin{center}
    \caption{Automatic planting parametrization in DSSAT.
    PFRST: Starting date of the planting window;
    PLAST: End date of the planting window;
    PH2OL: Lower limit on soil moisture for automatic planting;
    PH2OU: Upper limit on soil moisture for automatic planting; 
    PH2OD: Depth to which average soil moisture is determined for automatic planting;
    PSTMX: Maximum temperature of planting;
    PSTMN: Minimum temperature of planting.
    \label{tab:automaticPlanting}
    }
	\begin{tabular}{ ll}
		\textbf{PFRST} (DOY) & 155 \\
		\textbf{PLAST} (DOY) & 185 \\
		\textbf{PH2OL (\SI{}{\percent})}  & 40 \\
		\textbf{PH2OU (\SI{}{\percent})} & 100 \\
		\textbf{PH2OD} (\SI{}{\cm}) & 30 \\
		\textbf{PSTMX (\SI{}{\celsius}}) & 40 \\
		\textbf{PSTMN (\SI{}{\celsius}}) & 10 \\
	\end{tabular}
\end{center}
\end{table}

\section{Algorithms}
\label{sec:algorithmsBatchBandit}
\subsection{Details about \texttt{BCB}}
\label{sec:bcvtsDetailed}
In algorithm~\ref{algo:cohortAlgDetailed}, we provide the detailed pseudo-code of \texttt{BCB} (\texttt{BCB}). As shown by Figure~\ref{fig:dirichletSampling}, the higher the number of collected rewards, the less the weights sampled from Dirichlet distributions exhibit variance. This variance directly relates to the noise introduced in the computation of the score of the different available actions.

\begin{algorithm}[H]
	\caption{\texttt{BCB}: identification strategy at cohort level (detailed) \label{algo:cohortAlgDetailed}}
	\SetKwInput{KwData}{Input} 
	\KwData{Level $\alpha$, horizon $T$, $K$ options, upper bounds $B_1, \dots, B_K$, $\cF^c$ the set of all farmers in the cohort}
	\SetKwInput{KwResult}{Init.}
	\KwResult{$\forall k \in \{1, ..., K \}$: $\cX_k=\{B_k\}$, $N_k=0$ ; $\cF_1^c =  \{f_1, \cdots, f_{n_1}\}$ ; $t = 1$ ; $\cA_1 = \{\emptyset\}$}
	\tcp{Beginning of first season}
	\For{$f \in \cF_1^c$}{
	Randomly assign a crop management option $a \in \K$ to the farmer $f$\\
	$\cA_1 = \cA_1 \cup \{a\}$
	}
	\tcp{End of first season}
	\For{$(a, f) \in (\cA_1, \cF_1^c)$}{
	Receive the result of the option $a$ from farmer $f$: $r_{f,a}$\\
	Update $\cX_{a} = \cX_{a} \cup \{r_{f, a}\}, N_{a}=N_{a}+1$
	}
	\For{$ t \in \{2,\dots, T\}$ }{
		\tcp{Beginning of season $t$}
	    Get $\cF_t^c =  \{f_1, \cdots, f_{n_t}\}$ \tcp*{the set of farmers of the same cohort to provide recommendations}
		\For{$k \in \K$}{
		Update the empirical CVaR of action $k$:  $\hat{c}_{k, t-1} =\widehat{C}_\alpha(\cX_k)$
		}
		\For{$f \in \cF_t^c$}{
		Update the empirical regret of farmer $f$:  $l_{f, t-1} = \widehat{R}^{\alpha}_f(t-1)$
		}
		$\cA_t = \{\emptyset\}$  \tcp*{the set of recommendations to provide to the farmers}
		\For{$f \in \cF_t^c$}{
			\For{$k \in \K$}{
    			Draw $\omega_k=\{w_1, \cdots, w_{N_k}\} \sim \cD_{N_k}$ \tcp*{Dirichlet of concentration parameter $\underbrace{(1, \cdots, 1)}_{N_k ~\text{times}}$}
    			Search $j$ the maximum index such that $\sum_{i=1}^{j} w_i \leq \alpha$ \\
    			Sort $\cX_k$ in increasing order \\
    			Compute $\tilde{c}_{k} = x_j - \frac{1}{\alpha}\sum_{i=1}^{N_k} w_i \max(x_j - x_i, 0)$ \tcp*{assign a score to action $k$}
    		}
		    $a=\aargmax_{k \in \K} \tilde{c}_{k}$\\
		    $\cA_t = \cA_t \cup \{a\}$
		}
		\parbox[t]{\dimexpr\linewidth-\algorithmicindent}{Sort the set of farmers $\cF_t^c$ according their increasing empirical regrets $l_{f, t-1}$\\
		Sort the set of actions $\cA_t$ according their increasing empirical CVaR $\hat{c}_{k, t-1}$\\
		\For{$(a, f) \in (\cA_t, \cF_t^c)$}{
		Assign action $a$ to farmer $f$ \tcp*{fair exploration}}
		\tcp{End of season $t$}
		\For{$(a, f) \in (\cA_t, \cF_t^c)$}{
		Receive result of action a from farmer $f$: $r_{f,a}$ \\
		Update $\cX_{a} = \cX_{a} \cup \{r_{f, a}\}, N_{a}=N_{a}+1$}
		}
	}	
\end{algorithm}

\begin{figure}
    \centering
    \includegraphics[width=.5\textwidth]{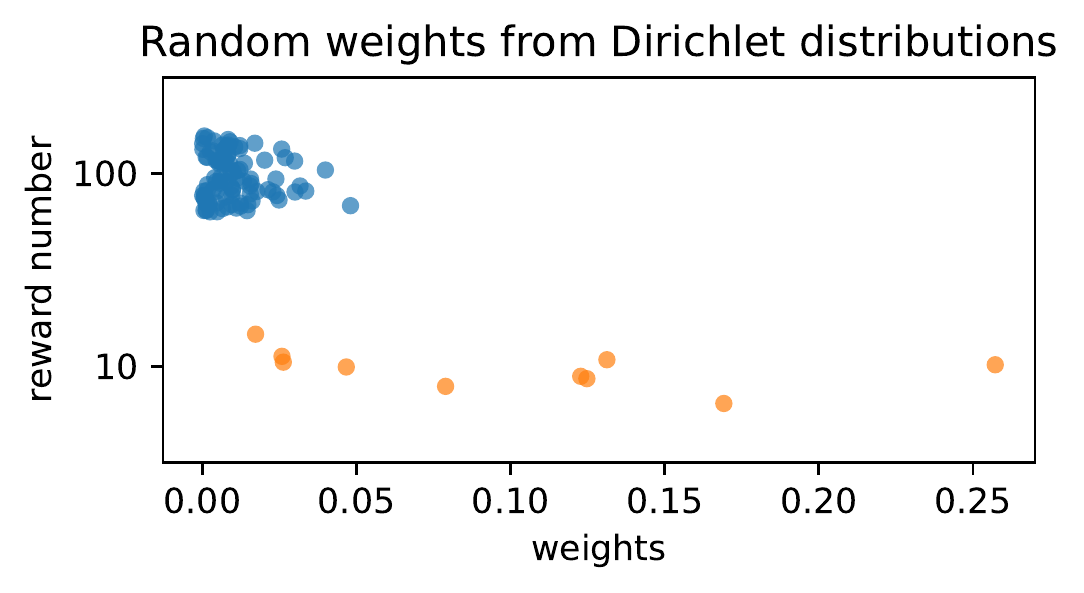}
    \caption{Examples of weights sampled from Dirichlet distributions during \texttt{BCB} execution, respectively for 10 and 100 rewards. The greater the number of rewards, the less variance the weights show. The variance of weights is related to the noise level in the computation of the empirical CVaR of \texttt{BCB}.}
    \label{fig:dirichletSampling}
\end{figure}

\begin{remark}[First season]\label{rem:firstSeason} Algorithm~\ref{algo:cohortAlgDetailed} is well defined for the first season as without data all CVaRs will be equal to the maximum observable result, making the algorithm choose each option arbitrarily at random. On average, each option will be equally explored. Note that we could replace this step by an equi-proportional exploration step (similar to Explore-Then-Commit, see~\ref{sec:ETC}) without changing the theoretical properties of our algorithm. Furthermore, the decision maker could also include any additional results collected before the experiment (if the practices has already been tested for some time) in the initialization of the algorithm.
\end{remark}

\subsection{Explore-Then-Commit (\texttt{ETC})}
\label{sec:ETC}

We provide the pseudo-code of the Explore-Then-Commit (\texttt{ETC}) strategy with algorithm \ref{algo:ETC}. The noise introduced by random weights and the presence of the maximum observable results in the histories manage the exploration/exploitation dilemma. \texttt{BCB} will favor fertilizer practices with higher CVaR compared to the others. But, the algorithm will still prevent the under-exploration of fertilizer practices by choosing them with a proper probability, even if e.g.\@ poor YE have been observed due to rare unfavorable weather events. Indeed, with the extra randomness introduced by the random weighting of rewards, poor rewards may be re-weighted by smaller weights compared to higher rewards, yielding a good score. The amount of noise introduced by the random weights sampled from the Dirichlet distribution is related to variance of these random weights. The greater the number of rewards, the lesser the variance and consequently the lesser the noise (Figure~\ref{fig:dirichletSampling}). Thereby, the more a fertilizer practice was tried by the algorithm, the closer its score gets to the true CVaR of rewards. The presence of the maximum observable YE acts as an ``optimistic bonus" in the computation of the scores, encouraging exploration even for sub-optimal practices, as it raises up their initial values when few rewards have been observed.

\begin{algorithm}[H]
	\caption{\texttt{ETC}: identification strategy at cohort level \label{algo:ETC}}
	\SetKwInput{KwData}{Input} 
	\KwData{Level $\alpha$, horizon $T$, $K$ options, $\cF^c$ the set of all farmers in the cohort, $t_{\text{trials}}$ the number of years of trials}
	\SetKwInput{KwResult}{Init.}
	\KwResult{$\forall k \in \{1, \cdots, K\}: N_k=0$}
	\tcp{Do trials during $t_{\text{trials}}$ years}
	\For{$t \in \{1, \cdots, t_{\text{trials}}\}$}{
    	\tcp{Beginning of the season $t$}
    	Get $\cF_t^c =  \{f_1, \cdots, f_{n_t}\}$ \tcp*{get the farmers willing to participate}
    	$\cA_t = \{\emptyset\}$ \\
    	Fill $\cA_t$ by uniformly distributing the $K$ options to the farmers in $\cF^c_t$ \\
    	\tcp{End of the season $t$}
    	\For{$(a,f) \in (\cA_t, \cF_t^c)$}{
    		Receive the result of the option $a$ from farmer $f$: $r_{f,a}$ \\
    	    Update $\cX_{a} = \cX_{a} \cup \{r_{f, a}\}, N_{a}=N_{a}+1$
	    }
	}
	\For{$k \in \K$}{
    	Compute the empirical CVaR of action $k$:  $\hat{c}_{k, t-1} =C_\alpha(\cX_k)$
    }	
    $a_{\max} = \aargmax_{k \in \K} \hat{c}_{k}$
    \tcp*{get the action that best performed during trials}
    \tcp{After trial phase, always recommend the action that best performed during trials}
	\For{$t \in \{t_{\text{trials}} + 1, \cdots, T\}$}{
	    \tcp{Beginning of the season $t$}
	    Get $\cF_t^c =  \{f_1, \cdots, f_{n_t}\}$ \\
    	\For{$f \in \cF_1^c$}{
    	Assign option $a_{\max}$ to the farmer $f$
    	}
        \tcp{End of the season $t$}
        \For{$f \in \cF_t^c$}{
    		Receive the result of the option $a_{\max}$ from farmer $f$: $r_{f,a_{\max}}$ \\
    	    Update $\cX_{a_{\max}} = \cX_{a_{\max}} \cup \{r_{f, a_{\max}}\}, N_{a_{\max}}=N_{a_{\max}}+1$
	    }
	}
\end{algorithm}

\section{Experiment complements}
\label{sec:additionalExpBatchBandit}
Following methods of Section~\ref{sec:methodsBatchBandit} of the main text, we provide identification performances of identification strategies for CVaR levels $\alpha=50\%$ and $\alpha=100\%$ with Figures~\ref{fig:complementEmpiricalCVaR},~\ref{fig:complementGroupRegret} and ~\ref{fig:complementFarmerRegret}. For both CVaR levels, the YE is defined with $\text{ANE}_{\text{ref}}=15$ kg N/kg grain.
\newcommand{\widthFactorExtraExpp}{.48}
\begin{figure}
	\centering
	\begin{subfigure}[t]{\widthFactorExtraExpp \textwidth}
		\centering
		\includegraphics[width=\textwidth]{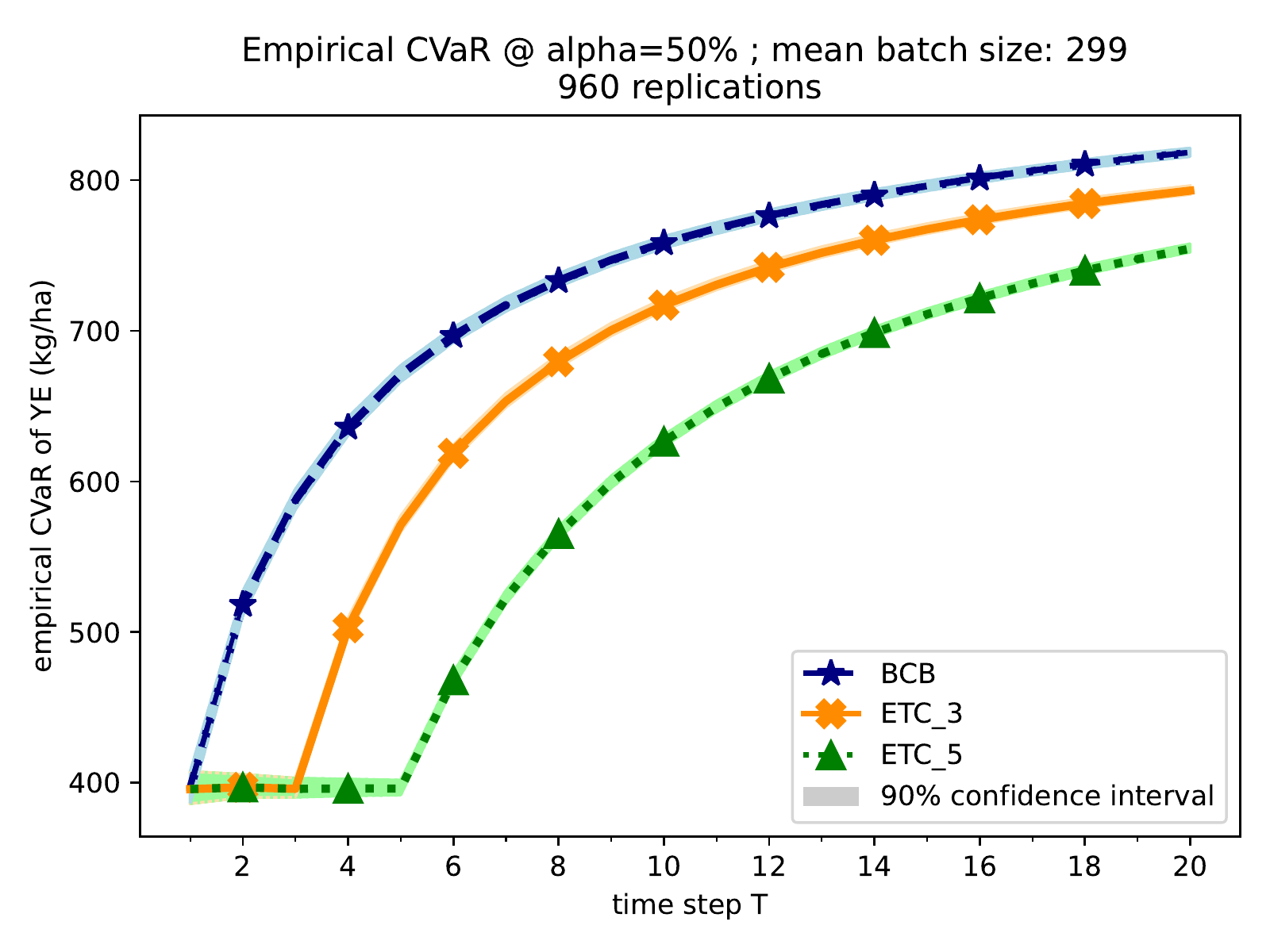}
		\caption{$\alpha=50\%$}
	\end{subfigure}%
	\hfill%
	\begin{subfigure}[t]{\widthFactorExtraExpp \textwidth}
		\centering
		\includegraphics[width=\textwidth]{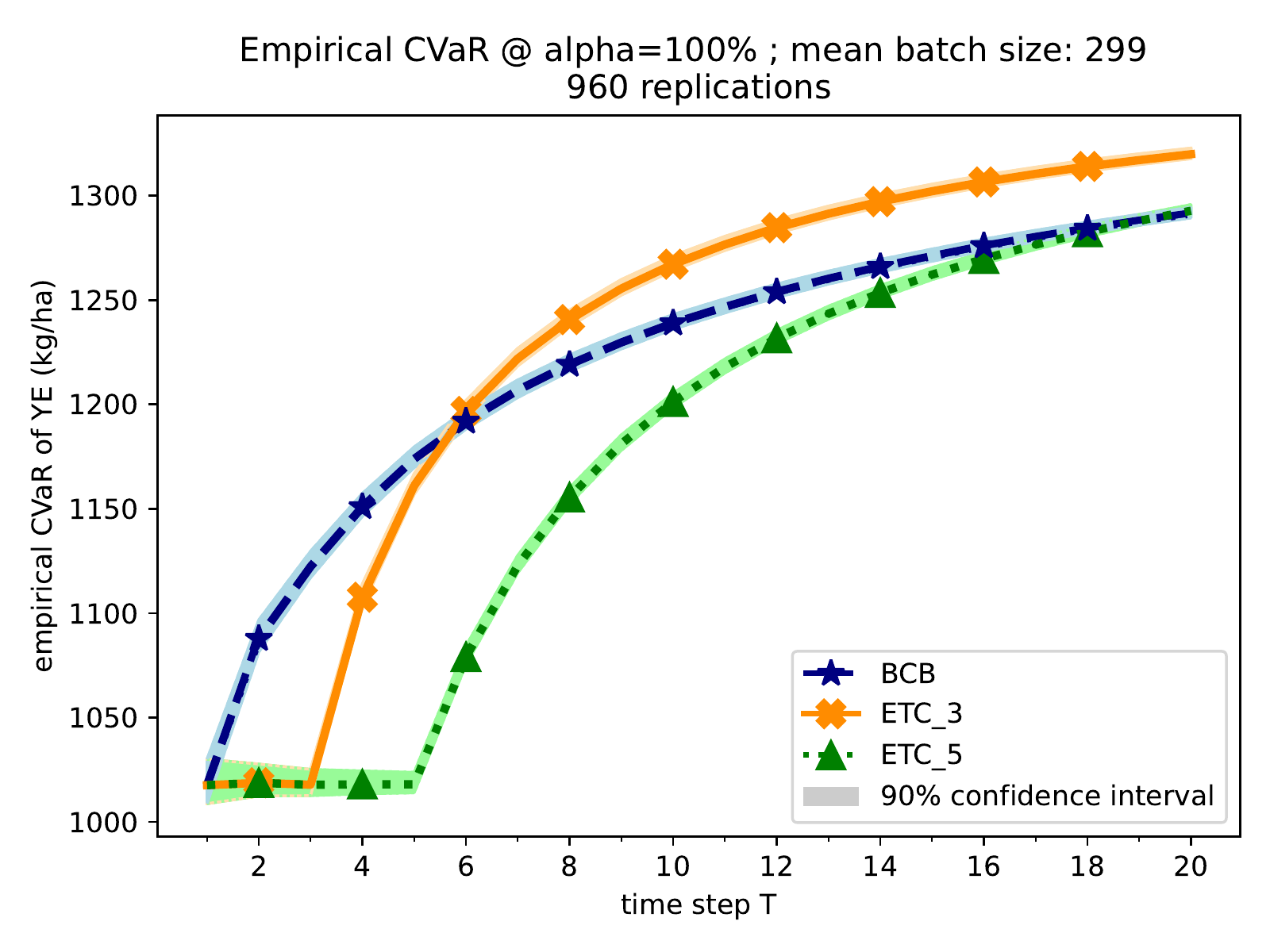}
		\caption{$\alpha=100\%$}
	\end{subfigure}
\caption{Farmers' empirical CVaR at level of all YE received between $T=0$ and the considered $T$. \label{fig:complementEmpiricalCVaR}}
\end{figure}

\begin{figure}
    \begin{subfigure}[t]{\widthFactorExtraExpp \textwidth}
    	\centering
    	\includegraphics[width=\textwidth]{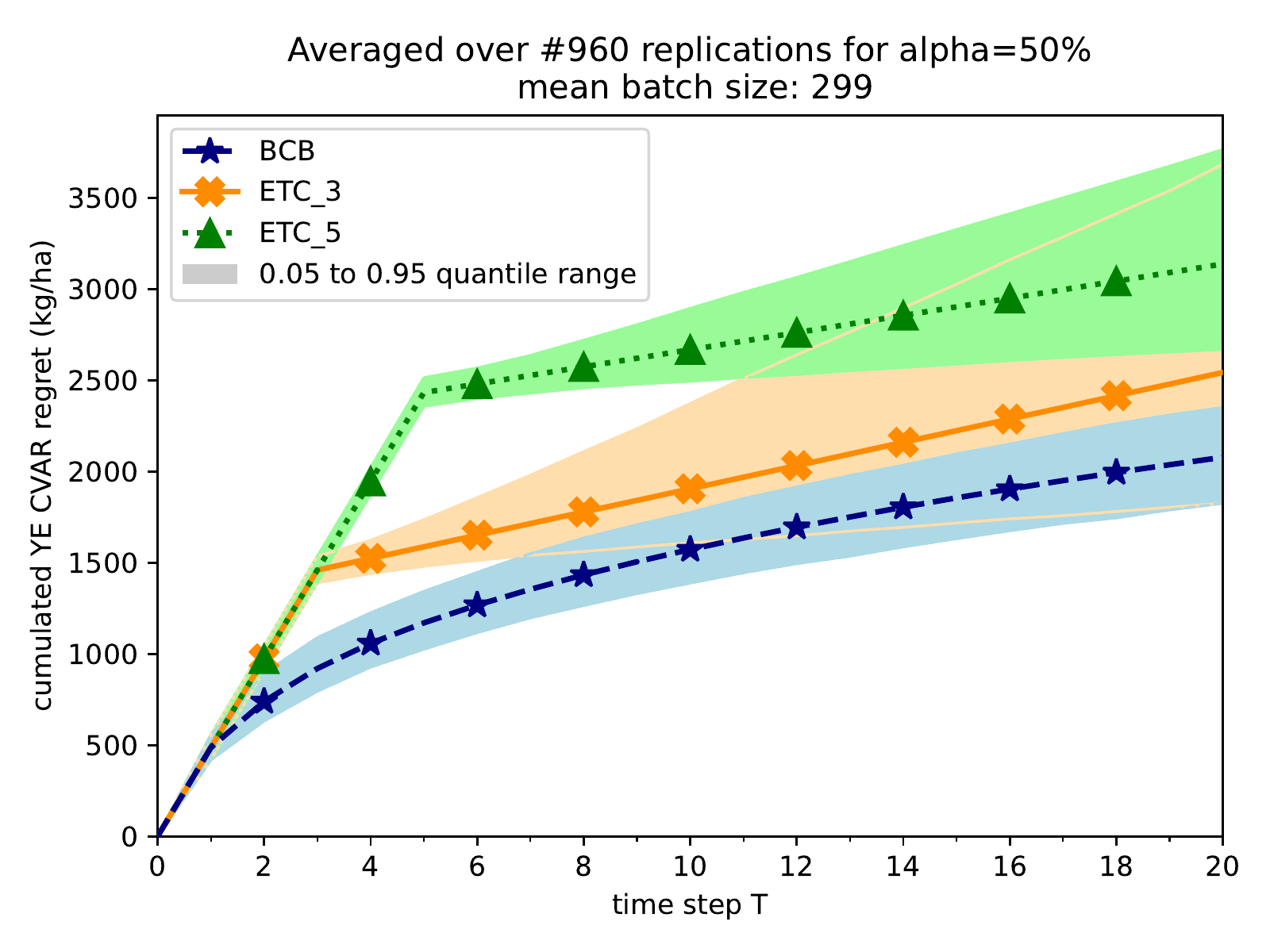}
    	\caption{$\alpha=50\%$}
    \end{subfigure}%
    \hfill%
    \begin{subfigure}[t]{\widthFactorExtraExpp \textwidth}
    	\centering
    	\includegraphics[width=\textwidth]{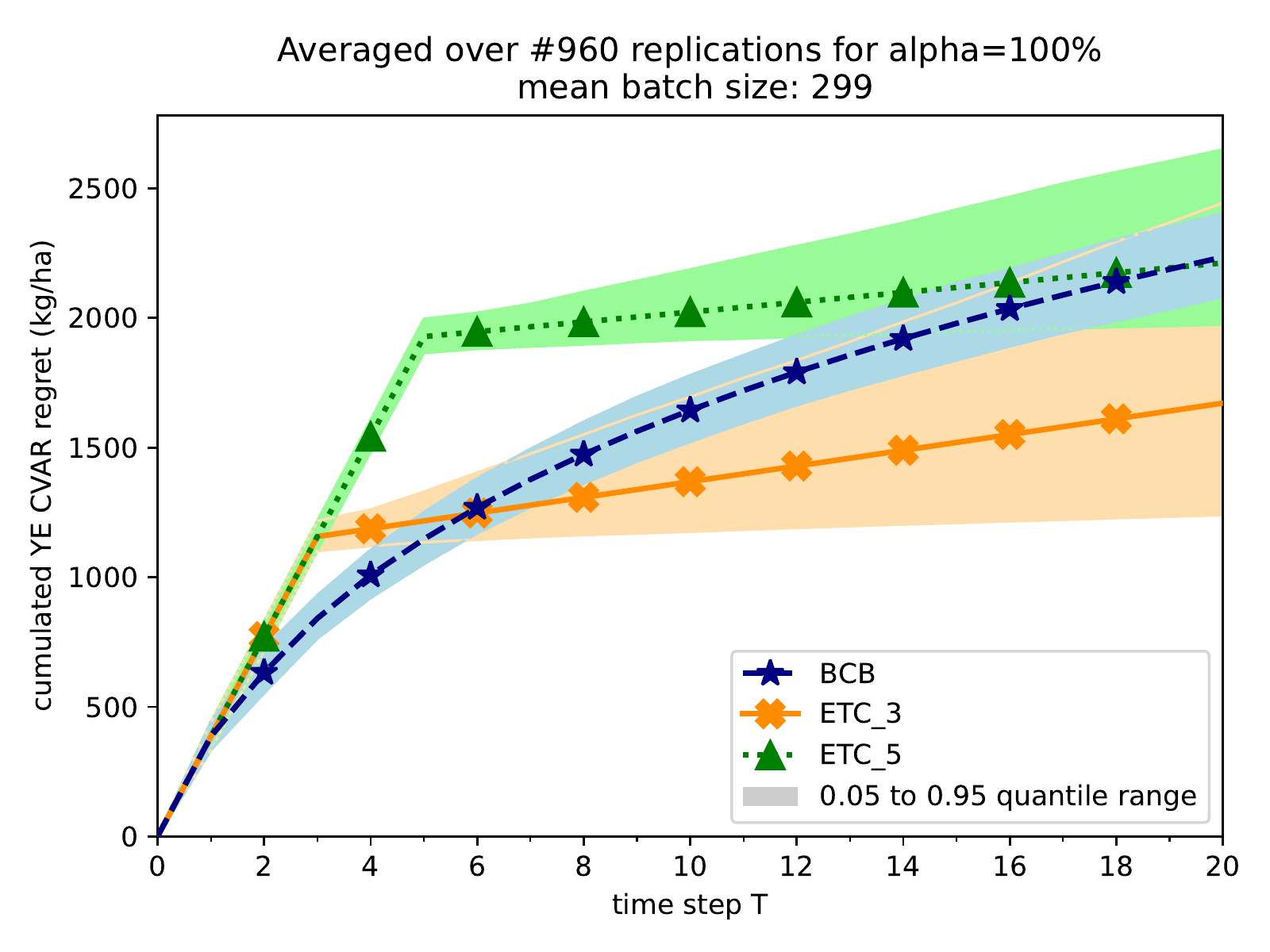}
    	\caption{$\alpha=100\%$}
    \end{subfigure}
\caption{Cumulated regret averaged over the population for the CVaR at level of YE.     \label{fig:complementGroupRegret}}
\end{figure}
	
\begin{figure}
	\begin{subfigure}[t]{\widthFactorExtraExpp \textwidth}
    	\centering
    	\includegraphics[width=\textwidth]{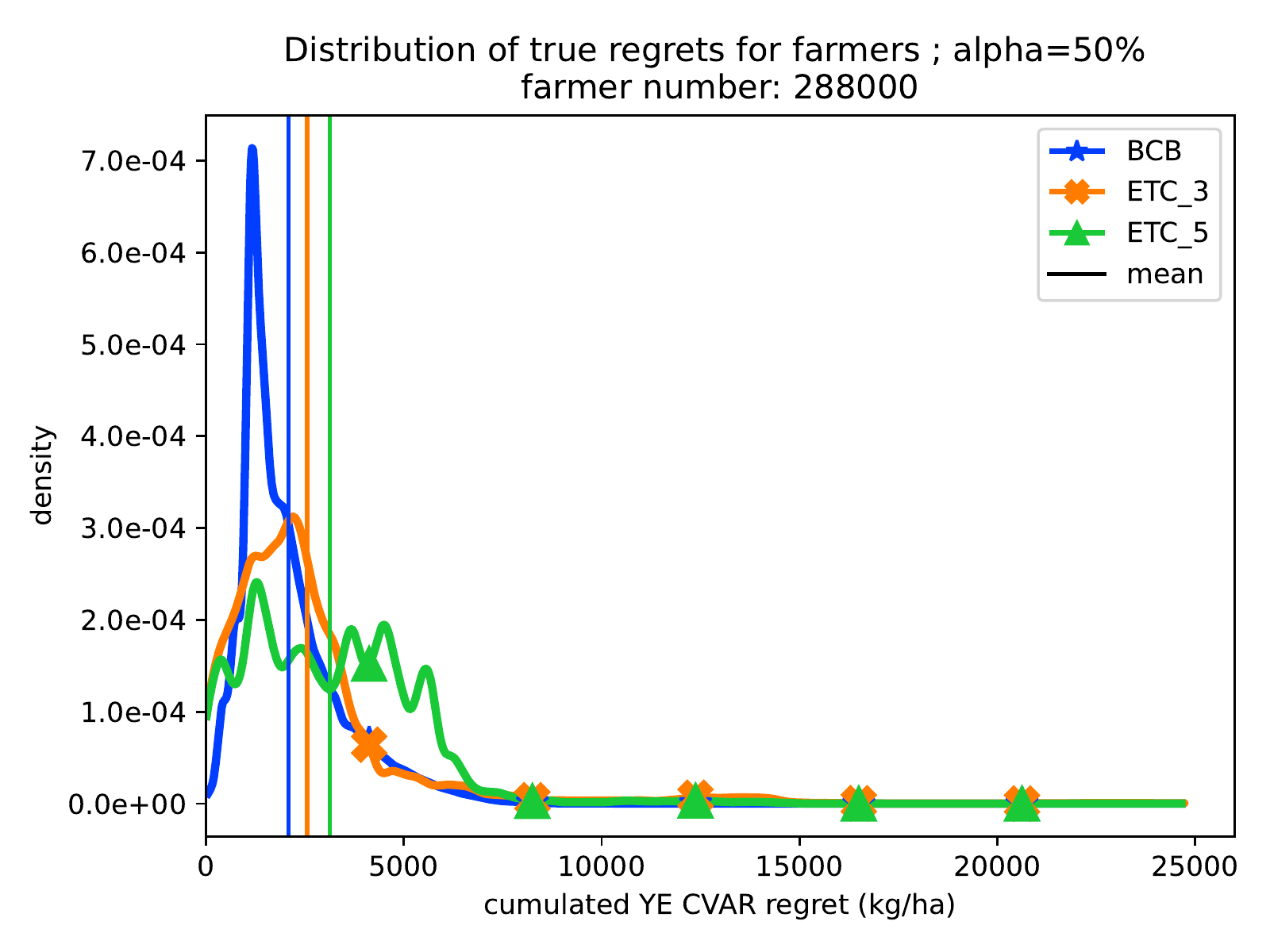}
		\caption{$\alpha=50\%$}
    \end{subfigure}%
    \hfill%
    \begin{subfigure}[t]{\widthFactorExtraExpp \textwidth}
    	\centering
    	\includegraphics[width=\textwidth]{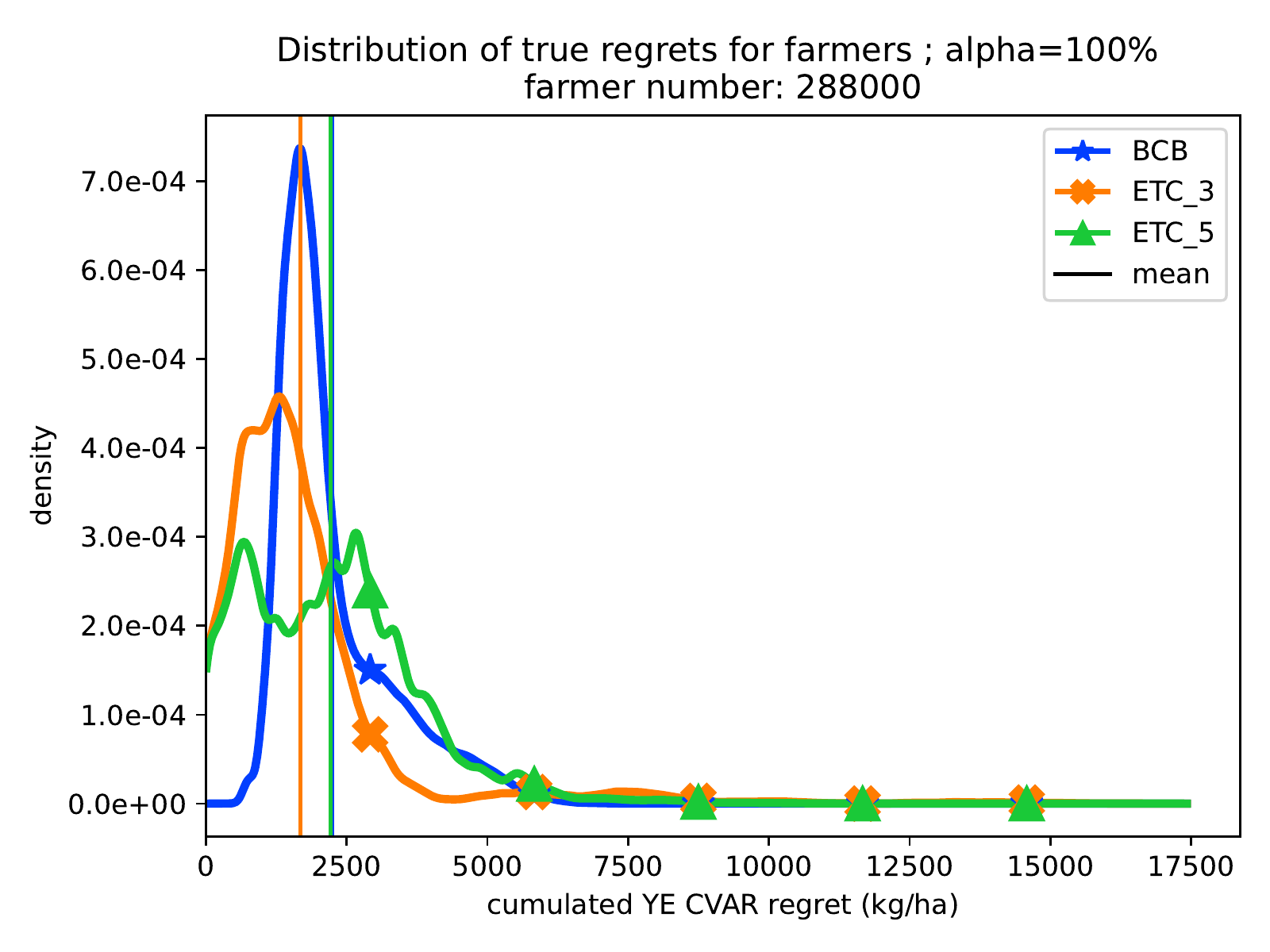}
		\caption{$\alpha=100\%$}
    \end{subfigure}
\caption{Distribution of individual cumulated regret after $T=20$. \label{fig:complementFarmerRegret}}
\end{figure}

\section{Theoretical Analysis}
\label{sec:demoBatchBandit}
This section is devoted to the theoretical analysis of the \texttt{BCB} algorithm. We will mostly adapt the analysis of \cite{baudry2021optimal}, and show that the problem of learning with batched data of finite upper bounded size is no harder than the pure online learning problem considered in the original paper. 

\begin{theorem}[$\alpha$-CVaR Regret of BCB]\label{th:regret_bcb}
Consider a bandit problem $(F_1,\dots, F_K) \in \cF^K$, with respective CVaR$_\alpha$ denoted by $(c_1,\dots, c_K)$ with $c_1=\text{argmax}_{k = 1,\dots, K} c_k$. Assume that BCB runs for $T$ seasons, and that at each season the size of the batch is $n_T\leq F \in \N$. Then, for any $\epsilon>0$ small enough there exists some $\epsilon_1>0, \epsilon_2>0$ such that the regret of BCB satisfies 
\[\cR_T^\alpha \leq \sum_{k=2}^K \Delta_k^\alpha \left( m_T^k + F + 2F \frac{e^{-2m_T^k \epsilon_1^2}}{1-e^{-2\epsilon_1^1}} + C_{1,\epsilon_2}^\alpha\right)\;, \]
where $m_T^k = \frac{\log(T)+\log(F)}{\kinfcv(F_k, c_1)-\epsilon}$ and $C_{1,\epsilon_2}$ is a constant depending only on the distribution $F_1$, the family $\cF$ and $\epsilon_2$.
\end{theorem}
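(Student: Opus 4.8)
The plan is to reduce the batched analysis to the online regret bound for the CVaR variant of NPTS established by \cite{baudry2021optimal}, quantifying the overhead caused by batching through the bounded batch size $F$. I would begin from the classical regret decomposition over suboptimal practices,
\[
\cR_T^\alpha \;=\; \sum_{k=2}^K \Delta_k^\alpha \,\Esp[N_k(T)]\,, \qquad \Delta_k^\alpha := c_1 - c_k\,,
\]
where $N_k(T)$ counts the total number of farmers assigned practice $k$ over the $T$ seasons. It then suffices to show that each $\Esp[N_k(T)]$ is bounded by the parenthesised expression; the additive $F$-terms and the $\log F$ inside $m_T^k$ are precisely the price paid for the batch structure.

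For a fixed suboptimal $k$ I would split $N_k(T)$ at the threshold $m_T^k$. Writing the per-farmer sampled scores $\tilde c_k$ from Algorithm~\ref{algo:cohortAlgDetailed}, a farmer is assigned $k$ only if $\tilde c_k \geq \tilde c_1$. Hence, once the history count satisfies $N_k \geq m_T^k$, any further assignment of $k$ forces either an \emph{under-estimation} of the optimal arm, $\tilde c_1 < c_1 - \epsilon_2$, or an \emph{over-estimation} of arm $k$, $\tilde c_k \geq c_1 - \epsilon_2$. This yields
\[
\Esp[N_k(T)] \;\leq\; m_T^k + F + \Esp\Big[\textstyle\sum \ind\{\tilde c_k \geq c_1 - \epsilon_2,\, N_k \geq m_T^k\}\Big] + \Esp\Big[\textstyle\sum \ind\{\tilde c_1 < c_1 - \epsilon_2\}\Big]\,,
\]
where the leading $m_T^k$ counts the pulls needed to reach the threshold and the extra $+F$ absorbs the at-most-$F$ over-shoot incurred within the single batch during which the count first crosses $m_T^k$ (the history is frozen throughout a batch of size $\leq F$).

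The over-estimation term is controlled by the boundary-crossing and concentration bounds for the Dirichlet-reweighted empirical CVaR: with $n \geq m_T^k$ observations of $F_k$, the probability that $\tilde c_k$ exceeds $c_1 - \epsilon_2$ decays geometrically, $\Pr(\tilde c_k \geq c_1 - \epsilon_2 \mid N_k = n) \leq e^{-2 n \epsilon_1^2}$ for a suitable $\epsilon_1 = \epsilon_1(\Delta_k^\alpha,\epsilon_2) > 0$. Since the frozen history implies at most $F$ assignments of $k$ per value of the count $n$, summing the geometric series from $n = m_T^k$ gives the $2F\,e^{-2m_T^k\epsilon_1^2}/(1-e^{-2\epsilon_1^2})$ contribution. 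The threshold itself is calibrated through the CVaR information projection: choosing $m_T^k = (\log T + \log F)/(\kinfcv(F_k,c_1)-\epsilon)$ makes the boundary-crossing probability at the threshold of order $(TF)^{-1}$, which matches the total number $\leq TF$ of farmer decisions and renders $m_T^k$ the dominant, asymptotically optimal term. The under-estimation term is handled exactly as in the online proof: the optimal practice is assigned linearly often, so $N_1$ grows and $\Pr(\tilde c_1 < c_1 - \epsilon_2 \mid N_1 = n_1)$ is summable in $n_1$, bounding this contribution by a finite constant $C_{1,\epsilon_2}^\alpha$ depending only on $F_1$, $\cF$ and $\epsilon_2$.

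The main obstacle is precisely this batch reduction: I must show that freezing the history within each batch of size at most $F$ makes the problem \emph{no harder} than the online setting. Concretely, the index computed at season $t$ uses $N_k(t-1)$ samples, which lags the number a purely online algorithm would have used by at most $F$; I would carry this lag through the boundary-crossing and anti-concentration lemmas of \cite{baudry2021optimal}, checking that each retains its form with the count shifted by $O(F)$ — this is what produces the additive $F$, the factor $2F$, and the $\log F$ shift, while leaving the leading rate $\kinfcv(F_k,c_1)$ untouched. The delicate point is ensuring the under-estimation constant $C_{1,\epsilon_2}^\alpha$ remains free of $F$: although each batch resamples $\tilde c_1$ up to $F$ times, the summability over sample counts $n_1$ (rather than over individual decisions) must absorb the batch multiplicity, so that coupling to the online process introduces no extra $F$-dependence there. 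Assembling the four terms and summing over $k$ then delivers the stated bound.
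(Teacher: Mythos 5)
Your proposal is correct and takes essentially the same route as the paper's proof: the same regret decomposition and threshold split with the $+F$ batch overshoot (which the paper formalizes via the stopping time $\tau = \sup\{t \le T : N_k(t-1)\le m_T\}$, giving $m_T^k + F$), the same DKW-based geometric series carrying the factor $F$, the same calibration $m_T^k = \frac{\log T + \log F}{\kinfcv(F_k,c_1)-\epsilon}$ matched against the $\le TF$ total recommendations, and the same reduction of the under-estimation term to the sequential constant $C_{1,\epsilon_2}$ of \citet{baudry2021optimal}, which is independent of $F$. The only imprecision is that your single inequality $\Pr(\tilde c_k \ge c_1 - \epsilon_2 \mid N_k = n) \le e^{-2n\epsilon_1^2}$ conflates two sub-events that the paper keeps separate --- the empirical distribution leaving an $\epsilon_1$-L\'evy ball of $F_k$ (handled by DKW) and the Dirichlet reweighting over-estimating despite an accurate empirical distribution (handled by the $\kinfcv$ boundary-crossing bound) --- but since you invoke both mechanisms and attribute each to the correct term of the final bound, the substance matches.
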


It is interesting to compare this regret upper bound to the one obtained in the purely sequential setting, that we recall in Theorem~\ref{th:regret_bcvt}. 
\begin{theorem}[$\alpha$-CVaR Regret of B-CVTS with time horizon $S_T$ (adapted from Theorem 3 in \cite{baudry2021optimal})]
Consider a bandit problem $(F_1,\dots, F_K) \in \cF^K$, with respective CVaR$_\alpha$ denoted by $(c_1,\dots, c_K)$ with $c_1=\text{argmax}_K c_k$. Consider a number of data collected $S_T$. Then, for any $\epsilon>0$ small enough there exists some $\epsilon_1>0, \epsilon_2>0$ such that the CVaR-regret of B-CVTS satisfies 
\[\cR_T^\alpha \leq \sum_{k=2}^K \Delta_k^\alpha\left(n_{S_T}^k + 2 \frac{e^{-2n_{S_T}^k \epsilon_1^2}}{1-e^{-2\epsilon_1^1}} + C_{1,\epsilon_2}^\alpha\right)\;, \]
where $m_{S_T}^k = \frac{\log(S_T)}{\kinfcv(F_k, c_1)-\epsilon}$ and $C_{1,\epsilon_2}$ is a constant depending only on the distribution $F_1$, the family $\cF$ and $\epsilon_2$.
\label{th:regret_bcvt}
\end{theorem}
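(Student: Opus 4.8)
The plan is to reuse the regret decomposition of \cite{baudry2021optimal} for the purely sequential B-CVTS and to localize precisely the three points at which the batched updates of BCB intervene, checking that a season of size at most $F$ inflates each contribution by at most a factor $F$, an additive $F$, or not at all. Let $N_k(T)$ denote the total number of farmer-assignments of practice $k$ across the $T$ seasons. As in the online case, the CVaR-regret splits arm by arm, $\cR_T^\alpha=\sum_{k=2}^K\Delta_k^\alpha\,\bE[N_k(T)]$ with gaps $\Delta_k^\alpha=c_1-c_k$ (this is Equation~\ref{eq:regret2}, and the reduction of the objective in Equation~\ref{eq:farmerObj} to this quantity is \cite{cassel2018general}), so it suffices to bound $\bE[N_k(T)]$ for each suboptimal $k$. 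I fix a threshold $x_k$ with $c_k<x_k<c_1$ and classify each assignment of practice $k$ according to whether its Dirichlet-reweighted score $\tilde c_k$ exceeds $x_k$ (an \emph{over-concentration} of arm $k$) or lies below $x_k$, in which case practice $k$ can only be the argmax if the reweighted score of the optimal practice also falls below $x_k$ (an \emph{anticoncentration} of arm $1$). The structural fact I exploit throughout is that within a season the histories $\cX_1,\dots,\cX_K$, and hence the laws of all scores, are frozen until the season ends, while the Dirichlet weights are drawn independently for each of the $n_t\le F$ farmers; note that the final fair-exploration step of Algorithm~\ref{algo:cohortAlgDetailed} only permutes which farmer receives which recommendation and leaves the multiset of chosen practices, hence each $N_k(T)$, unchanged.

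First I would treat the pulls of arm $k$ made before it is sufficiently sampled, i.e.\ while its recorded count is below $m_T^k=\frac{\log(T)+\log(F)}{\kinfcv(F_k,c_1)-\epsilon}$. Deterministically the count can cross $m_T^k$ only inside some season, and that season adds at most $F$ assignments, so these pre-concentration pulls number at most $m_T^k+F$; this yields the $m_T^k+F$ term and is where the total number $S_T\le TF$ of decisions enters, giving $\log(TF)=\log(T)+\log(F)$ in the threshold, exactly the sequential threshold of Theorem~\ref{th:regret_bcvt} at horizon $S_T=TF$. Next, for the over-concentration pulls made once the count has passed $m_T^k$, I would adapt the boundary-crossing control of \cite{baudry2021optimal}: their concentration lemma for the reweighted CVaR gives, at a frozen count $n$, a per-farmer probability $\bP(\tilde c_k>x_k)\le 2e^{-2n\epsilon_1^2}$. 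The batch twist is that the count is frozen for a whole season, so each value $n\ge m_T^k$ is left (the count strictly increases) in a single season contributing at most $F$ over-concentration pulls; charging one value $n$ per increment and summing $\sum_{n\ge m_T^k}2e^{-2n\epsilon_1^2}$ produces the inflated term $2F\frac{e^{-2m_T^k\epsilon_1^2}}{1-e^{-2\epsilon_1^2}}$. The factor $F$ here is genuine: a concentrated suboptimal arm is almost never pulled, so its count stagnates and every season re-exposes the same value $n$ to up to $F$ independent draws.

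The delicate term is the anticoncentration one, which must remain the \emph{same} constant $C_{1,\epsilon_2}$ as in the sequential bound, with no factor $F$. A naive union bound over the $\le F$ farmers of each season would give an unwanted $F$; the point is that this inflation is exactly cancelled because the batch also advances the optimal arm $F$ times faster. Concretely I would charge each anticoncentration pull (a farmer with $\tilde c_1\le x_k$) to pulls of arm $1$. Within a season at frozen optimal-count $s$ the farmers are i.i.d., so among the $n_t$ of them a fraction $\approx(1-p_{1,s})$ are anticoncentrated while a fraction $\approx\pi_{1,s}$ pull arm $1$, where $p_{1,s}=\bP(\tilde c_1>x_k\mid N_1=s)$ and $\pi_{1,s}$ is the probability arm $1$ is the argmax; hence that season's anticoncentration pulls are at most $(1-p_{1,s})\,n_t$ in expectation while it advances $N_1$ by $\approx\pi_{1,s}\,n_t$. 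Summing over seasons, the increment $\Delta N_1\approx\pi_{1,s} n_t$ turns $\sum_t (1-p_{1,s})n_t$ into a Riemann sum $\sum_s\frac{1-p_{1,s}}{\pi_{1,s}}\,\Delta N_1$ that telescopes into $\sum_{s\ge 0}(1-p_{1,s})/\pi_{1,s}$, a sum over \emph{all} integer counts $s$ that does not see the step size $n_t$. Since $c_1>x_k$, the reweighted CVaR of arm $1$ concentrates above $x_k$, so $p_{1,s}\to1$ and $\pi_{1,s}$ stays bounded away from $0$ past a constant $s_0(F_1,\epsilon_2)$, making the sum finite; the finitely many early counts $s\le s_0$ cost a bounded number of seasons (few are needed precisely because each already advances $N_1$ by up to $F$), again absorbed into $C_{1,\epsilon_2}$. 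This cancellation, batched data making arm $1$ learned proportionally faster and neutralizing the per-season factor $F$, is the heart of the claim that batching of bounded size is ``no harder'' than online learning, and making the charging/Riemann-sum argument rigorous, controlling the fluctuations of the i.i.d.\ within-season counts and the stopping times at which counts advance, is the main obstacle.

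Finally I would sum the three contributions for each suboptimal $k$ to get $\bE[N_k(T)]\le m_T^k+F+2F\frac{e^{-2m_T^k\epsilon_1^2}}{1-e^{-2\epsilon_1^2}}+C_{1,\epsilon_2}^\alpha$, then multiply by $\Delta_k^\alpha$ and sum over $k$ to obtain the stated bound. Throughout, $\epsilon$ is chosen small enough that $\kinfcv(F_k,c_1)-\epsilon>0$, and $\epsilon_1,\epsilon_2$ are then fixed as in \cite{baudry2021optimal} so that $x_k$ separates the gaps and the boundary-crossing rate matches $\kinfcv(F_k,c_1)-\epsilon$; verifying that these choices survive the freezing of histories within a season, i.e.\ that the concentration and boundary-crossing lemmas of the online analysis still apply to scores computed from a count lagging by at most one batch, is the routine but necessary glue.
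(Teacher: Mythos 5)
You are, in effect, proving the paper's batch theorem (Theorem~\ref{th:regret_bcb}); the quoted sequential statement is imported from \cite{baudry2021optimal} without proof, and your bound would only recover it as the unit-batch case $F=1$, $S_T=T$. Measured against the paper's proof of Theorem~\ref{th:regret_bcb}, your skeleton is largely the same --- the arm-by-arm decomposition $\cR_T^\alpha=\sum_k \Delta_k^\alpha\,\bE[N_k(T)]$, the stopping-time argument showing the season that straddles the threshold adds at most $F$ pulls (hence $m_T^k+F$), the factor-$F$ geometric sum, and a horizon-free constant for the optimal arm --- but there is a genuine gap in your ``over-concentration'' term. You collapse the paper's three events into two and claim a per-farmer bound $\bP(\tilde c_k > x_k)\le 2e^{-2n\epsilon_1^2}$; no such bound is available, since the Dvoretzky--Kiefer--Wolfowitz inequality controls the empirical CDF, not the Dirichlet-reweighted CVaR score. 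The paper instead splits $\{A_{t,f}=k\}$ into: (i) $\widehat F_{k,n}\notin\cB_{\epsilon_1}(F_k)$, handled by DKW with the factor $F$ and the geometric sum you describe, and (ii) $\widehat F_{k,n}\in\cB_{\epsilon_1}(F_k)$ but the noisy CVaR overshoots, where the continuity of $\kinfcv$ \citep{agrawal2020optimal} gives a per-draw conditional bound $e^{-m_T(\kinfcv(F_k,c_1)-\epsilon)}$, and a union over at most $F\times T$ draws is what forces the calibration $m_T^k=\frac{\log T+\log F}{\kinfcv(F_k,c_1)-\epsilon}$. In your writeup nothing ties $m_T^k$ to $\kinfcv$: your only exponential rate is $\epsilon_1^2$, and your explanation of the $\log F$ (``where $S_T\le TF$ enters'') reproduces the right formula without an argument --- with only your two events one would just as well take $m_T^k\asymp\log(TF)/\epsilon_1^2$ and obtain the wrong leading constant.

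The second gap is the one you flag yourself: the anticoncentration term $A_3$. Your charging/Riemann-sum cancellation (batches advance arm $1$ proportionally faster, neutralizing the per-season factor $F$) is a plausible intuition, but you concede it is not rigorous, and making it so --- controlling the fluctuations of within-season counts and the stopping times at which $N_1$ advances --- would be the hardest part of your proof. The paper resolves this in one step: assign an arbitrary order to the recommendations inside each batch, rewrite $A_3=\bE\bigl[\sum_{r=1}^{S_T}\ind(\tilde c_1(r)\le c_1-\epsilon_2)\bigr]$ with $S_T=\sum_t n_t\le FT$, and observe this is exactly the term bounded in \cite{baudry2021optimal} by a constant $C_{1,\epsilon_2}$ depending only on $F_1$, $\epsilon_2$ and the support bound --- crucially, not on the number of rounds, which is why frozen histories within a season (many independent draws at the same sample count of arm $1$) are harmless and no factor $F$ appears. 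So the fix for both gaps is the same: adopt the paper's three-event decomposition, use the $\kinfcv$ boundary-crossing bound to calibrate $m_T^k$, and replace your cancellation heuristic with the reindexing-plus-citation step.
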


First, we see that if $F$ is indeed a constant (i.e do not depend on the time) then when $T$ is large enough then $F$ has not impact on the scaling of the regret. In our proof the main impact of the batch setting is an additive term $F$ for each arm, hence the regret becomes close to the one of the sequential setting once $m_T^k \gg F$. Finally, if the number of farmers in each batch is exactly $F$ at each step then $S_T=FT$ and, $m_T^k=n_{S_T}^K$, hence the asymptotically dominant (logarithmic) term is the same in the two settings.

These theoretical results show that learning with batch feedback does not introduce theoretical limitations in our setting, and so the BCB algorithm is theoretically grounded.

\begin{proof}[Proof of Theorem \ref{th:regret_bcb}.]

As in the proof of \cite{baudry2021optimal} we will decompose the expected number of pulls of each sub-optimal arm inside the cohort according to several possible events, corresponding to "good" scenarios (the empirical distributions accurately reflect the true distributions) and "bad" ones (the empirical distributions give a wrong idea of the true performance of some arms) for the trajectory of the bandit algorithms. We denote by $T$ the number of seasons in the experiments and $n_t$ the number of farmers at each season $t$ for this cohort, and by $F$ the total number of farmers available for the experiment. Then, the expected number of pulls of arm $k$ during the total duration of the experiment inside the cohort is \[\bE[N_k(T)]=\bE\left[\sum_{t=1}^T \sum_{f=1}^{n_t} \ind(A_{t, f} = k) \right] \;,\]
where $A_{t, f}$ denotes the recommendation to farmer $f$ at season $t$.

The first step of the proof consists in considering the number of pulls of $k$ when its sample size is larger (resp. smaller) than some fixed threshold $m_T$, that we will specify later.

\begin{align*}
    \bE[N_k(T)]&=\bE\left[\sum_{t=1}^T \sum_{f=1}^{n_t} \ind(A_{t, f} = k) \right] \\
    & \leq \bE\left[\sum_{t=1}^T \sum_{f=1}^{n_t} \ind(A_{t, f} = k, N_k(t-1) \leq m_T) \right] +\bE\left[\sum_{t=1}^T \sum_{f=1}^{n_T} \ind(A_{t, f} = k, N_k(t-1) \geq m_T) \right]\\
\end{align*}

We now consider the first term and introduce the random variable $\tau = \{\sup_{t \leq T}: N_k(t-1)\leq m_T\}$. By construction, $\tau$ is the last season for which the total number of observations for arm $k$ is smaller than $m_T$. Using the basic properties of $\tau$ we obtain that

\begin{align*}
  \sum_{t=1}^T \sum_{f=1}^{n_t} \ind(A_{t, f} = k, N_k(t-1) \leq m_T)  & \leq \sum_{t=1}^{\tau} \sum_{f=1}^{n_t} \ind(A_{t, f} = k, N_k(t-1) \leq m_T) + \sum_{t=\tau+1}^{T} \sum_{f=1}^{n_t} \ind(A_{t, f} = k, N_k(t-1) \leq m_T)\\
  & \leq  N_k(\tau) + \sum_{f=1}^{n_{\tau+1}} \ind(A_{\tau, f} = k) \\
  & \leq m_T + F \\
\end{align*}

As this result does not depend on the value of $\tau$, we can then obtain 

\[ \bE[N_k(T)] \leq m_T + F + \underbrace{ \bE\left[\sum_{t=1}^T \sum_{f=1}^{n_t} \ind(A_{t, f} = k, N_k(t-1) \geq m_T) \right]}_{A} \;.\]

At this step, the only difference with the purely sequential bandit problem is the additional $F$. We now consider the term $A$, that we further analyze according to three events: (1) the empirical distribution of arm $k$ is not close to its true distribution, (2) the empirical distribution of arm $k$ is close to its true distribution but the "noisy" CVaR computed for arm $k$ over-estimates its true CVaR, and (3) the "noisy" CVaR computed for the optimal arm $1$ under-estimates its true CVaR. Classically in bandit analysis, we decompose the number of pulls of arm $k$ according to these three events, as at least one of them must be true when $A_{t,f}=k$ holds, that is
\[\{A_t=k\} \subset \{F_{k, t-1} \notin \cB_{\epsilon_1}(F_k) \} \cup \{F_{k, t-1} \in \cB_{\epsilon_1}(F_k), \widetilde c_{k, t, f} \geq c_1-\epsilon_2\} \cup \{\widetilde c_{1, t, f} \leq c_1 - \epsilon_2\} \;,\]

where $\cB_{\epsilon_1}(F_k)$ is an $\epsilon_1$-Levy ball around $F_k$, and $\epsilon_1, \epsilon_2$ are two small positive constants. This leads to 

\begin{align*}A &\leq \underbrace{\bE\left[\sum_{t=1}^T \sum_{f=1}^{n_t} \ind(A_{t, f} = k, N_k(t-1) \geq m_T, F_{k, t-1} \notin \cB_{\epsilon_1}(F_k)) \right]}_{A_1} \\
&+ \underbrace{\bE\left[\sum_{t=1}^T \sum_{f=1}^{n_t} \ind(A_{t, f} = k, N_k(t-1) \geq m_T, F_{k, t-1} \in \cB_{\epsilon_1}(F_k), \widetilde c_{k, t, f} \geq c_1-\epsilon_2) \right]}_{A_2} \\
&+ \underbrace{\bE\left[\sum_{t=1}^T \sum_{f=1}^{n_t} \ind(A_{t, f} = k, N_k(t-1) \geq m_T, \widetilde c_{1, t, f} \leq c_1 - \epsilon_2) \right]}_{A_3} \;. \end{align*}

\paragraph{Upper bounding $A_2$} Denoting by $\widehat F_{k, n}$ the empirical distribution of arm $k$ after a total number of \textit{pulls} $n$ (instead of after season $t$), we obtain 

\begin{align*}
    A_1 & \coloneqq \bE\left[\sum_{t=1}^T \sum_{f=1}^{n_t} \ind(A_{t, f} = k, N_k(t-1) \geq m_T, F_{k, t-1} \notin \cB_{\epsilon_1}(F_k)) \right] \\
    & \leq \bE\left[\sum_{t=1}^T  \ind(N_k(t-1) \geq m_T, F_{k, t-1} \notin \cB_{\epsilon_1}(F_k)) \sum_{f=1}^{n_t} \ind(A_{t, f} = k) \right] \\
    & \leq  \bE\left[\sum_{t=1}^T \sum_{n=m_T}^{T} \ind(N_k(t-1) =n, F_{k, t-1} \notin \cB_{\epsilon_1}(F_k))\sum_{f=1}^{n_t} \ind(A_{t, f} = k)\right] \;, \\
\end{align*}
with a union bound on the number of pulls. Under $N_k(t-1)=n$ it holds that $F_{k, t-1}=\widehat F_{k,n}$, and so we can further write that

\begin{align*}
A_1    & \leq  \bE\left[\sum_{t=1}^T \sum_{n=m_T}^{T} \ind(N_k(t-1)=n, \widehat F_{k, n} \notin \cB_{\epsilon_1}(F_k))\sum_{f=1}^{n_t} \ind(A_{t, f} = k)\right] \\
    & \leq  \bE\left[\sum_{n=m_T}^{T} \ind(\widehat F_{k, n} \notin \cB_{\epsilon_1}(F_k)) \sum_{t=1}^T \sum_{f=1}^{n_t} \ind(A_{t, f} = k, N_k(t-1)=n)\right] \\
    & \leq F \bE\left[\sum_{n=m_T}^{T} \ind(\widehat F_{k, n} \notin \cB_{\epsilon_1}(F_k))\right] \\
    & = F \sum_{n=m_T}^{+\infty} \bP(F_{k, n} \notin \cB_{\epsilon_1}(F_k)) \\
\end{align*}

Finally, using the Dvoretzky–Kiefer–Wolfowitz inequality \citep{massart1990} we obtain
\begin{align*}
    & \leq F \sum_{n=m_T}^{+\infty} 2 e^{-2n \epsilon_1^2} \\
    & \leq \frac{2F e^{-2m_T \epsilon_1^2}}{1-e^{-2\epsilon_1^2}} \;. \\
\end{align*}

This upper bound holds for any choice of $m_T, \epsilon_1$, and we remark that if $m_T \rightarrow +\infty$ then $A_1 \rightarrow 0$.

\paragraph{Upper bounding $A_2$}The term $A_2$ is then handled with similar tricks, and the arguments used in \cite{baudry2021optimal}.

\begin{align*}
    A_2 & \coloneqq  \bE\left[\sum_{t=1}^T \sum_{f=1}^{n_t} \ind(A_{t, f} = k, N_k(t-1) \geq m_T, F_{k, t-1} \in \cB_{\epsilon_1}(F_k), \widetilde c_{k, t, f} \geq c_1-\epsilon_2) \right]\\
        & \leq \bE\left[\sum_{t=1}^T \sum_{f=1}^{F} \ind(N_k(t-1) \geq m_T, F_{k, t-1} \in \cB_{\epsilon_1}(F_k)) \times \bP\left(\widetilde c_{k, t, f} \geq c_1-\epsilon_2 | \cF_t\right) \right] \;, \\
\end{align*}

where $\cF_t$ is the canonical filtration, so the probability is obtained conditioning on the data observed before the beginning of the round. Using the the continuity of $\kinfcv$ in its two arguments as proved in \cite{agrawal2020optimal}, we obtain that for any $\epsilon>0$ small enough there exist some $\epsilon_1, \epsilon_2$ such that 
\begin{align*}
    A_2 & \leq \bE\left[\sum_{t=1}^T \sum_{f=1}^{F} \ind(A_{t, f} = k, N_k(t-1) =n, F_{k, t-1} \in \cB_{\epsilon_1}(F_k)) e^{-m_T \left( \kinfcv(F_k, c_1)-\epsilon\right)} \right] \\
    & \leq F \times T \times e^{-m_T \left( \kinfcv(F_k, c_1)-\epsilon\right)} \;.\\
\end{align*}

As we did not specify the choice of $\epsilon_1, \epsilon_2$ already we simply require them to be small enough to satisfy this condition. Then, we can calibrate $m_T$ as \[m_T = \frac{\log(T)+\log(F)}{\kinfcv(F_k, c_1)-\epsilon}\;,\]

Furthermore, with this choice $m_T$ will become the main term in the regret upper bound when $T$ becomes large enough.

\paragraph{Upper bounding $A_3$} The final term is the one that leading to the most complicated part of the analysis in \cite{baudry2021optimal}. Fortunately, the batch setting will have no impact on this part, so we can directly reuse the results provided in this paper.

Indeed, we can re-write $A_3$ to make it equivalent to the corresponding term in the purely sequential problem: \[A_3 = \bE\left[\sum_{t=1}^T \sum_{f=1}^{n_t} \ind(\widetilde c_{1, t, f} \leq c_1 - \epsilon_2)\right] = \bE\left[\sum_{r=1}^{S_T} \ind(\widetilde c_{1}(r) \leq c_1 - \epsilon_2)\right] \;, \]

where in the second term we count the number of recommendations provided by the algorithm, assigning those in the same batch an arbitrary order, $\widetilde c_1(r)$ is then the noisy CVaR computed for arm $1$ for this specific round. Furthermore, we write $S_T = \sum_{t=1}^T n_t \leq FT$. In \cite{baudry2021optimal}, the authors obtain a constant upper bound for this term, depending only on $\epsilon_2$ (and the upper bound of the support), and in particular not depending on the exact number of plays. We conclude that there exists some constant $C_{1, \epsilon_2}$ satisfying \[A_3 \leq C_{1, \epsilon_2} \;.\] 

This result concludes our proof, and we refer the interested reader to the original paper for a complete proof and a detailed expression for $C_{1, \epsilon_2}$. We further remark that contrarily to the previous terms, the upper bound of $A_3$ does not depend on $F$ at all.
\end{proof}

\section{Alternative performance measure of fertilizer practices}
\label{sec:YEcomplement}
\paragraph{}We briefly discuss economical criteria we considered as performance indicators of fertilizer practices. A first indicator we considered was the gross margin. The cost of production of nitrogen fertilizer being indexed on the price of natural gas, it is subject to high volatility. As a consequence, an optimal practice is likely to be different each year and thus the decision problem would turn to be highly non-stationary. Such setting dramatically increases the complexity of the decision problem, and the chance of observing good identification performances are lowered.

\paragraph{}Another economic measure could be the value:cost ratio (VCR), which is given for a fertilizer practice $\pi$ as:
\begin{align}
\text{VCR}^{\pi} &= \frac{{p}_{\text{maize}}}{{p}_{\text{N}}} \times \frac{Y^{\pi} - Y^0}{\text{N}^{\pi}}\\
&= \frac{{p}_{\text{maize}}}{{p}_{\text{N}}} \times \text{ANE}^{\pi}
\end{align}
where ${p}_{\text{N}}$ is fertilizer unitary cost and ${p}_{\text{maize}}$ unitary maize grain selling price. Remarking that each given year the ratio $\frac{{p}_{\text{maize}}}{{p}_{\text{N}}}$ is shared by all fertilizer practices. We neglect a possible quality consideration that could motivate a different maize selling price between the fertilizer practices, for instance a difference of protein content in maize grains. Then the decision problem is perfectly equivalent to choosing the fertilizer practice which maximizes the ANE. Thereby, the use of the cost:value ratio suffers from the same drawbacks as the ANE.

\end{appendix}
\end{document}